\documentclass{article} %
\usepackage{iclr2021_conference,times}
\usepackage{comment}

\usepackage{amsmath,amsfonts,bm}

\def\figref#1{figure~\ref{#1}}

\def\eqref#1{equation~\ref{#1}}
\def\Eqref#1{Equation~\ref{#1}}

\def\1{\bm{1}}

\def\eps{{\epsilon}}

\def\vx{{\bm{x}}}

\DeclareMathAlphabet{\mathsfit}{\encodingdefault}{\sfdefault}{m}{sl}
\SetMathAlphabet{\mathsfit}{bold}{\encodingdefault}{\sfdefault}{bx}{n}

\newcommand{\pdata}{p_{\rm{data}}}

\newcommand{\E}{\mathbb{E}}

\newcommand{\abs}[1]{\lvert#1\rvert}

\newcommand{\bs}[1]{\boldsymbol{#1}}

\newcommand{\be}{\begin{equation}}
	\newcommand{\ee}{\end{equation}}

\definecolor{Gray}{gray}{0.85}
\definecolor{LightCyan}{rgb}{0.88,1,1}

\makeatletter
\usepackage{xspace}
\def\@onedot{\ifx\@let@token.\else.\null\fi\xspace}
\DeclareRobustCommand\onedot{\futurelet\@let@token\@onedot}

\newcommand{\tabref}[1]{Tab\onedot~\ref{#1}}
\newcommand{\thmref}[1]{Theorem~\ref{#1}}

\newcommand{\bfx}{\mathbf{x}}

\newcommand{\bfe}{{\bs{\epsilon}}}

\def\eg{\emph{e.g}\onedot}

\def\ie{\emph{i.e}\onedot}

\newcommand{\chenlin}[1]{{\color{cyan} [CM: {#1}]}}

\usepackage[textsize=tiny]{todonotes}

\usepackage{url}
\usepackage{xcolor}
\usepackage{hyperref}
\hypersetup{colorlinks,breaklinks,
            citecolor=[rgb]{0,0.47, 0.75},
            linkcolor=[rgb]{0.0, 0.5, 0.0},
            urlcolor=green}
            
\title{Improved Autoregressive Modeling \\ with Distribution Smoothing}

\author{Chenlin Meng, Jiaming Song, Yang Song, Shengjia Zhao \& Stefano Ermon\\
Stanford University\\
\texttt{\{chenlin,tsong,yangsong,sjzhao,ermon\}@cs.stanford.edu}\\
}

\iclrfinalcopy %
\begin{document}
\maketitle

\begin{abstract}
While autoregressive models excel at image compression, their sample quality is often lacking. 
Although not  realistic,  generated images often have high likelihood according to the model, resembling the case of adversarial examples.
Inspired by a successful adversarial defense method, we incorporate randomized smoothing into autoregressive generative modeling. 
We first model a smoothed version of the data distribution, and then reverse the smoothing process to recover the original data distribution. 
This  procedure drastically improves the sample quality of existing autoregressive models on several synthetic and real-world image datasets while obtaining competitive likelihoods on synthetic datasets.

\end{abstract}

\section{Introduction}
Autoregressive models have exhibited promising results in a variety of downstream tasks. For instance, they have shown success in compressing images~\citep{minnen2018joint}, synthesizing speech~\citep{oord2016wavenet} and modeling complex decision rules in games~\citep{vinyals2019grandmaster}. 
However, the sample quality of autoregressive models on real-world image datasets is still lacking.

Poor sample quality might be explained by the manifold hypothesis: many real world data distributions (\eg natural images) lie in the vicinity of a low-dimensional manifold~\citep{belkin2003laplacian},  
leading to complicated densities with sharp transitions (\ie high Lipschitz constants), which are known to be difficult to model for density models such as normalizing flows~\citep{cornish2019relaxing}. Since each conditional of an autoregressive model is a 1-dimensional normalizing flow (given a fixed context of previous pixels), a high Lipschitz constant will likely hinder learning of autoregressive models.

Another reason for poor sample quality is the ``compounding error" issue in autoregressive modeling.
To see this, we note that an autoregressive model relies on the previously generated context to make a prediction; once a mistake is made, the model is likely to make another mistake which compounds~\citep{kaariainen2006lower}, eventually resulting in questionable and unrealistic samples. 
Intuitively, one would expect the model to assign low-likelihoods to such unrealistic images, however, this is not always the case.
In fact, the generated samples, although appearing unrealistic, often are assigned  high-likelihoods by the autoregressive model, resembling an ``adversarial example"~\citep{szegedy2013intriguing,biggio2013evasion}, an input that causes the model to output an incorrect answer with high confidence.

Inspired by the recent success of randomized smoothing techniques in adversarial defense~\citep{cohen2019certified}, we propose to apply randomized smoothing to autoregressive generative modeling. More specifically, we propose to address a density estimation problem via a two-stage process. Unlike \cite{cohen2019certified} which applies smoothing to the model to make it more robust, we apply smoothing to the data distribution. Specifically, we convolve a symmetric and stationary noise distribution with the data distribution to obtain a new ``smoother" distribution.  
In the first stage, we model the smoothed version of the data distribution using an autoregressive model.
In the second stage, we reverse the smoothing process---a procedure which can also be understood as ``denoising"---by either applying a gradient-based denoising approach~\citep{alain2014regularized} or introducing another conditional autoregressive model to recover the original data distribution from the smoothed one. 
By choosing an appropriate smoothing distribution, 
we aim to make each step easier than the original learning problem: smoothing facilitates learning in the first stage by making the input distribution fully supported without sharp transitions in the density function; generating a sample given a noisy one is easier than generating a sample from scratch.

We show with extensive experimental results that our approach is able to drastically improve the sample quality of current autoregressive models on several synthetic datasets and real-world image datasets, while obtaining competitive likelihoods on synthetic datasets. We empirically demonstrate that our method can also be applied to density estimation, image inpainting, and image denoising.
\begin{figure}
    \vspace{-16pt}
    \centering
    \includegraphics[width=0.75\textwidth]{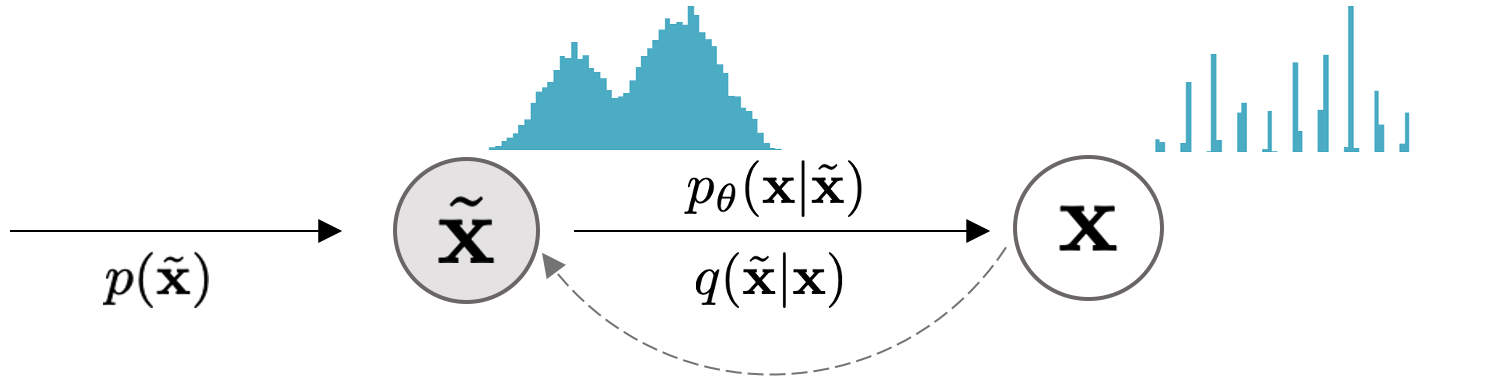}
    \caption{Overview of our method. From a data distribution ($\bfx$) we inject noise ($q(\tilde\bfx | \bfx)$) which makes the distribution smoother ($\tilde{\bfx}$); then we model the smoothed distribution ($p_\theta(\tilde{\bfx})$) as well as the denoising step ($p_\theta(\bfx | \tilde{\bfx})$), forming a two-step model. 
    }
    \label{fig:model}
\end{figure}

\section{Background}

We consider a density estimation problem. Given $D$-dimensional i.i.d samples $\{\bfx_1,\bfx_2,...,\bfx_N\}$ from a continuous data distribution
$p_{\text{data}}(\bfx)$, the goal is to approximate $p_{\text{data}}(\bfx)$ with a model $p_{\theta}(\bfx)$ parameterized by $\theta$. A commonly used approach for density estimation is maximum likelihood estimation (MLE), where the objective is to maximize %
    $L(\theta)\triangleq \frac{1}{N} \sum_{i=1}^{N}\log p_{\theta}(\bfx_i)$.

\subsection{Autoregressive models}
An autoregressive model~\citep{larochelle2011neural,salimans2017pixelcnn++} 
decomposes a joint distribution $p_{\theta}(\bfx)$ into the product of  univariate conditionals: %
\begin{equation}
    p_{\theta}(\bfx)=\prod_{i=1}^{D} p_{\theta}(x_i|\bfx_{<i}),
\end{equation}
where $x_i$ stands for the $i$-th component of $\bfx$, and $\bfx_{<i}$ refers to the components with indices smaller than $i$.
In general, an autoregressive model parameterizes each conditional $p_{\theta}(x_i|\bfx_{<i})$ using a pre-specified density function (\eg mixture of logistics). This bounds the capacity of the model by limiting the number of modes for each conditional. 

Although autoregressive models have achieved top likelihoods amongst all types of density based models, 
their sample quality is still lacking compared to energy-based models~\citep{du2019implicit} and score-based models~\citep{song2019generative}.
We believe this can be caused by the following two reasons.

\subsection{Manifold Hypothesis}
Several existing methods~\citep{roweis2000nonlinear,tenenbaum2000global} 
rely on the manifold hypothesis,
\textit{i.e}. that real-world high-dimensional data tends to lie on a low-dimensional manifold~\citep{narayanan2010sample}. If the manifold hypothesis is true, then the density of the data distribution is not
well defined in the ambient space; if the manifold hypothesis 
 holds only approximately 
and the data lies in the vicinity of a manifold, then 
only points that are very close to the manifold would have high density, while all other points would have close to zero density.  
Thus we may expect the data density
around the manifold to have large first-order derivatives, i.e. the density function has a high Lipschitz constant (if not infinity).

To see this, let us consider a 2-d example where the data distribution is a thin ring distribution (almost a unit circle) formed by rotating the 1-d Gaussian distribution $\mathcal{N}(1, 0.01^2)$ around the origin. 
The density function of the ring has a high Lipschitz constant near the ``boundary". 
Let us focus on a data point travelling along the diagonal as shown in the leftmost panel in \figref{fig:manifold_hypothesis}. We plot the first-order directional derivatives of the density for the point as it approaches the boundary from the inside, then lands on the ring, and finally moves outside the ring (see \figref{fig:manifold_hypothesis}). As we can see, when the point is far from the boundary, the derivative has a small magnitude. When the point moves closer to the boundary, the magnitude increases and changes significantly near the boundary even with small displacements in the trajectory. However, once the point has landed on the ring, the magnitude starts to decrease. As it gradually moves off the ring, the magnitude first increases and then decreases 
just like when the point approached the boundary from the inside.
It has been observed that certain likelihood models, such as normalizing flows, exhibit pathological behaviors on data distributions whose densities have high Lipschitz constants~\citep{cornish2019relaxing}. Since each conditional of an autoregressive model is a 1-d normalizing flow given a fixed context, a high Lipschitz constant on data density could also hinder learning of autoregressive models. 

\begin{figure}
\vspace{-16pt}
\centering
\includegraphics[width=0.98\textwidth]{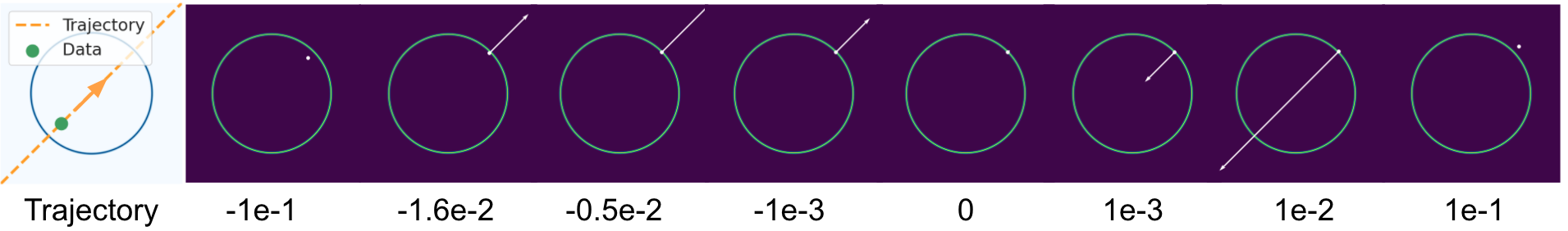}
\caption{Manifold hypothesis illustration. The data point is travelling along the diagonal as shown in the leftmost panel. The white arrow stands for the direction and magnitude of the derivative of density at the data point. The data location for each figure is $(\sqrt{0.5}+c, \sqrt{0.5}+c)$, where $c$ is the number below each figure and $(\sqrt{0.5}, \sqrt{0.5})$ is the upper right intersection of the trajectory with the unit circle.
} 
\vspace{-10pt}
\label{fig:manifold_hypothesis}
\end{figure}

\subsection{Compounding errors in autoregressive modeling}
Autoregressive models can also be  susceptible to compounding errors from the conditional distributions~\citep{lamb2016professor} during sampling time. 
We notice that an autoregressive model $p_{\theta}(\bfx)$ learns the joint density $p_{\text{data}}(\bfx)$ by matching each of the conditional $p_{\theta}(x_i|\bfx_{<i})$ with $p_{\text{data}}(x_i|\bfx_{<i})$.
In practice, we typically have access to a limited amount of training data,
which makes it hard for an autoregressive model to capture all the conditional distributions correctly due to the curse of dimensionality. 
During sampling, since a prediction is made based on the previously generated context, once a mistake is made at a previous step,
the model is likely to make more mistakes in the later steps, eventually generating a sample $\hat\bfx$ that is far from being an actual image, but is mistakenly assigned a high-likelihood by the model. 

The generated image $\hat\bfx$, being unrealistic but assigned a high-likelihood, 
resembles an adversarial example, i.e., an input that causes the model to make mistakes.
Recent works~\citep{cohen2019certified} in adversarial defense have shown that random noise can be used to improve the model's robustness to adversarial perturbations --- a process during which adversarial examples that are close to actual data are generated to fool the model. %
We hypothesize that such approach can also be applied to improve an autoregressive modeling process by making the model less vulnerable to compounding errors occurred during density estimation.
Inspired by the success of randomized smoothing in adversarial defense~\citep{cohen2019certified}, we propose to apply smoothing to autoregressive modeling to address the problems mentioned above.

\section{Generative models with distribution smoothing}
\begin{figure}
\vspace{-6pt}
    \centering
    \begin{subfigure}{0.35\linewidth}
    \includegraphics[width=\textwidth]{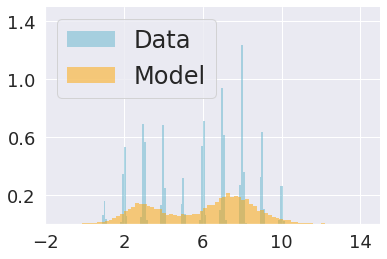}
    \caption{Data}
    \label{fig:1d_data}
    \end{subfigure}
    ~
    \begin{subfigure}{0.36\linewidth}
    \includegraphics[width=\textwidth]{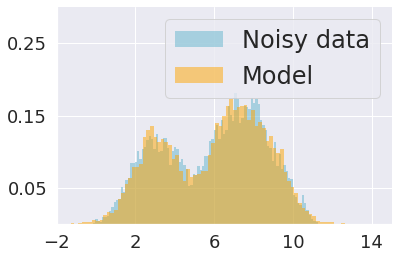}
    \caption{Smoothed data}
    \label{fig:1d_exp_noise}
    \end{subfigure}
\caption{Visualization of a 1-d data distribution without smoothing (a) or with smoothing (b), modeled by the same mixture of logistics model. 
} 
\label{fig:1d_data_example}
\end{figure}

In the following, we propose to decompose a density estimation task into a smoothed data modeling problem followed by an inverse smoothing problem where we recover the true data density from the smoothed one. 
\subsection{Randomized smoothing process}

Unlike \cite{cohen2019certified} where
randomized 
smoothing is applied to a model, we apply 
smoothing directly to the data distribution $p_{\text{data}}(\bfx)$. To do this, we introduce a smoothing distribution $q(\tilde \bfx|\bfx)$ --- a distribution that is symmetric and stationary (\eg a Gaussian or Laplacian kernel) --- 
and convolve it with $p_{\text{data}}(\bfx)$ to obtain a new distribution $q(\tilde \bfx)\triangleq \int q(\tilde \bfx|\bfx)p_{\text{data}}(\bfx) d\bfx$. When $q(\tilde \bfx|\bfx)$ is a normal distribution, this convolution process is equivalent to perturbing the data distribution with Gaussian noise, which, intuitively, will make the data distribution smoother.
In the following, we formally prove  that convolving a 1-d distribution $p_{\text{data}}(x)$ with a suitable noise can indeed ``smooth" $p_{\text{data}}(x)$.

\begin{restatable}{theorem}{lipschitz}
\label{thm:1d_lschitz}
Given a continuous and bounded 1-d distribution $p_{\text{data}}(x)$ that is supported on $\mathbb{R}$, for any 1-d distribution $q(\tilde x|x)$ that is symmetric (\ie  $q(\tilde x|x)=q(x|\tilde x)$), stationary (\ie  translation invariant) and satisfies $\lim_{x\to\infty}p_{\text{data}}(x)q(x|\tilde x)=0$ for any given $\tilde x$, we have $Lip(q(\tilde x))\le Lip(p_{\text{data}}(x))$, where $q(\tilde x)\triangleq \int q(\tilde x|x)p_{\text{data}}(x) dx$ and $Lip(\cdot)$ denotes the Lipschitz constant of the given 1-d function. 
\end{restatable}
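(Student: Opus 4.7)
The plan is to exploit the convolution structure directly and move the Lipschitz bound inside the integral. Stationarity means $q(\tilde x \mid x)$ depends only on $\tilde x - x$, so I write $q(\tilde x \mid x) = \phi(\tilde x - x)$; the symmetry condition $q(\tilde x \mid x) = q(x \mid \tilde x)$ then forces $\phi$ to be even. Since $q(\cdot \mid x)$ is a probability density on $\mathbb{R}$, we have $\phi \ge 0$ and $\int \phi = 1$. A change of variables $y = \tilde x - x$ rewrites the marginal as a convolution
\[
q(\tilde x) \;=\; \int \phi(\tilde x - x)\, p_{\text{data}}(x)\, dx \;=\; \int \phi(y)\, p_{\text{data}}(\tilde x - y)\, dy.
\]

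Assuming the nontrivial case $\mathrm{Lip}(p_{\text{data}}) < \infty$ (otherwise the conclusion is vacuous), fix $\tilde x_1, \tilde x_2 \in \mathbb{R}$. The key estimate is
\[
\bigl|q(\tilde x_1) - q(\tilde x_2)\bigr| \;\le\; \int \phi(y)\, \bigl|p_{\text{data}}(\tilde x_1 - y) - p_{\text{data}}(\tilde x_2 - y)\bigr|\, dy \;\le\; \mathrm{Lip}(p_{\text{data}})\,|\tilde x_1 - \tilde x_2|\int \phi(y)\, dy,
\]
where the first inequality is the triangle inequality for integrals (using $\phi \ge 0$) and the second applies the pointwise Lipschitz bound for $p_{\text{data}}$ uniformly in the shift $y$. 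Since $\int \phi = 1$, the ratio $|q(\tilde x_1)-q(\tilde x_2)|/|\tilde x_1-\tilde x_2|$ is bounded by $\mathrm{Lip}(p_{\text{data}})$, giving the desired conclusion.

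The main ``obstacle'' is essentially bookkeeping: verifying that $\phi \ge 0$ and $\int \phi = 1$ (immediate from $q(\cdot \mid x)$ being a probability density), and that the Lipschitz bound on $p_{\text{data}}$ applies uniformly in the shift parameter. Notably, this direct convolutional argument does not seem to require the decay hypothesis $\lim_{x \to \infty} p_{\text{data}}(x)\, q(x \mid \tilde x) = 0$; that assumption would be needed only for an alternative route where one differentiates $q(\tilde x)$ under the integral and then integrates by parts, transferring the derivative off $\phi$ onto $p_{\text{data}}$, so that the boundary terms vanish and one obtains $|q'(\tilde x)| \le \int \phi(y)\,|p_{\text{data}}'(\tilde x - y)|\, dy \le \mathrm{Lip}(p_{\text{data}})$. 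I would prefer the direct approach since it is cleaner and handles the merely Lipschitz (hence only a.e.\ differentiable) case without extra care.
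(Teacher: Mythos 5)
Your proof is correct, and it takes a genuinely different and arguably cleaner route than the paper's. The paper proves the bound by differentiating $q(\tilde x)$ under the integral sign, using stationarity and symmetry to swap $\nabla_{\tilde x}$ for $\nabla_x$ on the kernel, and then integrating by parts to transfer the derivative onto $p_{\text{data}}$, obtaining the identity
\[
\nabla_{\tilde x}\, q(\tilde x) \;=\; -\,\E_{p_{\text{data}}(x)}\bigl[\,q(x\mid\tilde x)\,\nabla_x \log p_{\text{data}}(x)\,\bigr],
\]
after which the bound $|p\,\nabla\log p|\le \mathrm{Lip}(p)$ and $\int q(x\mid\tilde x)\,dx = 1$ give the result. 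This is where the decay hypothesis $\lim_{x\to\infty} p_{\text{data}}(x)\,q(x\mid\tilde x)=0$ enters (to kill the boundary term in the integration by parts), and where symmetry is genuinely used. Your difference-quotient argument sidesteps differentiation entirely: once $q(\tilde x)$ is written as the convolution $\int \phi(y)\,p_{\text{data}}(\tilde x - y)\,dy$, the triangle inequality and the pointwise Lipschitz bound applied uniformly in the shift $y$ give the conclusion directly, with $\int\phi=1$ closing the estimate. As you observe, this requires neither the decay hypothesis nor, in fact, symmetry of the kernel --- stationarity and $\phi$ being a probability density suffice --- and it handles merely Lipschitz (only a.e.\ differentiable) densities without any regularity caveats. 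What the paper's route buys in exchange is the intermediate score identity displayed above, which expresses $\nabla_{\tilde x}\log q(\tilde x)$ as a posterior expectation of $\nabla_x\log p_{\text{data}}(x)$; this is the same Tweedie-type relation underlying the single-step denoising update in Eq.~(2) of the paper, so the authors' proof doubles as motivation for their later machinery, whereas yours is the more economical path to the stated inequality.
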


\thmref{thm:1d_lschitz} shows that  convolving a 1-d data distribution $p_{\text{data}}(x)$ with a suitable noise distribution $q(\tilde x|x)$ (\eg $\mathcal{N}(\tilde x|x, \sigma^2)$) can reduce the Lipschitzness (\ie increase the smoothness) of $p_{\text{data}}(x)$. We provide the proof of \thmref{thm:1d_lschitz} in Appendix~\ref{app:proofs}. 

Given $p_{\text{data}}(\bfx)$ with a high Lipschitz constant, we empirically verify that density estimation becomes an easier task on the smoothed distribution $q(\tilde{\bfx})$ than directly on $p_{\text{data}}(\bfx)$.
To see this, we visualize a 1-d example in \figref{fig:1d_data}, where we want to model a ten-mode data distribution with a mixture of logistics model. 
If our model has three logistic components, there is almost no way for the model, which only has three modes, to perfectly fit this data distribution, which has ten separate modes with sharp transitions. The model, after training (see \figref{fig:1d_data}), mistakenly assigns a much higher density to the low density regions between nearby modes. If we convolve the data distribution with $q(\tilde x|x)=\mathcal{N}(\tilde x|x, 0.5^2)$, the new distribution becomes smoother (see \figref{fig:1d_exp_noise}) and can be captured reasonably well by the same mixture of logistics model with only three modes (see \figref{fig:1d_exp_noise}).
Comparing the same model's performance on the two density estimation tasks, we can see that the model is doing a better job at modeling the smoothed version of the data distribution than the original data distribution, which has a high Lipschitz constant.

This smoothing process can also be understood as a regularization term for the original maximum likelihood objective (on the un-smoothed data distribution), encouraging the learned model to be smooth, as formalized by the following statement:
\newcommand{\diff}{\mathrm{d}}
\begin{restatable}[Informal]{proposition}{regular}
\label{thm:regular} Assume that the symmetric and stationary smoothing distribution $q(\tilde\bfx|\bfx)$ has small variance and negligible higher order moments, then %
\begin{equation*}
\small
    \E_{p_{\text{data}}(\bfx)}\E_{ q(\tilde\bfx|\bfx)}[\log p_{\theta}(\tilde\bfx)] \approx \E_{p_{\text{data}}(\bfx)}\left[\log p_\theta(\bfx) + \frac{\eta}{2} \sum_{i} \frac{\partial^2 \log p_\theta}{\partial x_i^2}\right],
\end{equation*}
for some constant $\eta$.
\end{restatable}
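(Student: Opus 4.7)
The plan is to prove this via a second-order Taylor expansion of $\log p_\theta$ around $\bfx$, exploiting the symmetry and stationarity of $q$ to kill the odd-order and off-diagonal terms, and using the ``negligible higher order moments'' assumption to discard everything beyond second order.

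First I would use stationarity to rewrite $q(\tilde\bfx\mid\bfx)$ as a noise distribution $q(\bs{\epsilon})$ with $\tilde\bfx = \bfx + \bs{\epsilon}$, so that
\begin{equation*}
\E_{q(\tilde\bfx|\bfx)}[\log p_\theta(\tilde\bfx)] = \E_{q(\bs{\epsilon})}[\log p_\theta(\bfx + \bs{\epsilon})].
\end{equation*}
Then, assuming $\log p_\theta$ is sufficiently smooth, I would Taylor expand around $\bfx$:
\begin{equation*}
\log p_\theta(\bfx + \bs{\epsilon}) = \log p_\theta(\bfx) + \sum_i \epsilon_i \frac{\partial \log p_\theta}{\partial x_i} + \frac{1}{2}\sum_{i,j} \epsilon_i\epsilon_j \frac{\partial^2 \log p_\theta}{\partial x_i \partial x_j} + R(\bs{\epsilon}),
\end{equation*}
where $R(\bs{\epsilon})$ collects the cubic-and-higher remainder.

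Next I would take the expectation with respect to $q(\bs{\epsilon})$ term by term. By symmetry of $q$ (viewed as centered at zero), the first-order expectations $\E[\epsilon_i]$ vanish, and by the stated symmetry/stationarity the cross-moments $\E[\epsilon_i\epsilon_j]$ for $i\neq j$ also vanish, leaving only the diagonal contributions $\E[\epsilon_i^2] = \eta$, a common constant independent of $i$. That reduces the surviving second-order contribution to exactly $\tfrac{\eta}{2}\sum_i \partial^2 \log p_\theta / \partial x_i^2$. The ``negligible higher-order moments'' hypothesis is then invoked to drop $\E_q[R(\bs{\epsilon})]$, giving
\begin{equation*}
\E_{q(\tilde\bfx|\bfx)}[\log p_\theta(\tilde\bfx)] \approx \log p_\theta(\bfx) + \frac{\eta}{2}\sum_i \frac{\partial^2 \log p_\theta}{\partial x_i^2}.
\end{equation*}
Taking $\E_{p_{\text{data}}(\bfx)}$ on both sides yields the claimed approximation.

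The main obstacle, and the reason the statement is flagged as informal, is justifying the two approximation steps rigorously. Controlling the Taylor remainder requires either an explicit bound on third derivatives of $\log p_\theta$ together with a small-variance assumption on $q$, or a moment bound of the form $\E\|\bs{\epsilon}\|^3 = o(\eta)$; without extra regularity on $p_\theta$, the ``$\approx$'' cannot be made quantitative. Similarly, the killing of off-diagonal moments implicitly requires $q$ to be not just symmetric in the 1-d marginal sense but coordinate-symmetric (\eg isotropic Gaussian or a product of symmetric 1-d kernels with equal variance); I would state this as a clarifying assumption at the outset rather than try to derive it from the informal hypotheses. Everything else is a routine moment computation and does not require further cleverness.
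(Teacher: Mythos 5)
Your proposal is correct and matches the paper's own argument essentially step for step: reparameterize the smoothing kernel as an additive noise distribution via stationarity, Taylor expand $\log p_\theta$ to second order, use symmetry to kill odd moments and the diagonal second-moment assumption to produce the $\tfrac{\eta}{2}\sum_i \partial^2 \log p_\theta / \partial x_i^2$ term, and invoke negligible higher moments to discard the remainder. You also correctly flag that killing the off-diagonal cross-moments requires coordinate-wise (element-wise) independence rather than just symmetry of the marginals --- the paper's formal version of the proposition makes exactly this an explicit assumption.
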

Proposition \ref{thm:regular} shows that our smoothing process provides a regularization effect on the original objective $\E_{p_{\text{data}}(\bfx)}[\log p_\theta(\bfx)]$ when no noise is added, where the regularization aims to maximize $\frac{\eta}{2} \sum_{i} \frac{\partial^2 \log p_\theta}{\partial x_i^2}$. Since samples from $\pdata$ should be close to a local maximum of the model, this encourages the second order gradients computed at a data point $\bfx$ to become closer to zero (if it were positive then $\bfx$ will not be a local maximum), creating a smoothing effect.
This extra term is also the trace of the score function (up to a multiplicative constant) that can be found in the score matching objective~\citep{hyvarinen2005estimation}, which is closely related to many denoising methods~\citep{vincent2011connection,hyvarinen2008optimal}.
This regularization effect can, intuitively, increase the generalization capability of the model.
In fact, it has been demonstrated empirically that training with noise can lead to improvements in network generalization \citep{sietsma1991creating,bishop1995training}. 
Our argument is also similar to that used in \citep{bishop1995training} except that we consider a more general generative modeling case as opposed to supervised learning with squared error. We provide the formal statement and proof of Proposition \ref{thm:regular} in Appendix~\ref{app:proofs}.

\subsection{Autoregressive distribution smoothing models}
Motivated by the previous 1-d example, instead of directly modeling $p_{\text{data}}(\bfx)$, which can have a high Lipschitz constant, we propose to first train an autoregressive model on the smoothed version of the data distribution $q(\tilde \bfx)$. Although the smoothing process makes the distribution easier to learn, it also introduces bias. Thus, we need an extra step to debias the learned distribution by reverting the smoothing process.

If our goal is to generate approximate samples for $\pdata(\bfx)$, 
when $q(\tilde\bfx|\bfx)=\mathcal{N}(\tilde\bfx|\bfx, \sigma^2I)$ 
and $\sigma$ is small, we can use the gradient of $p_{\theta}(\tilde \bfx)$ for denoising~\citep{alain2014regularized}.
More specifically, given smoothed samples $\tilde \bfx$ from $p_{\theta}(\tilde\bfx)$, we can ``denoise" samples via:
\begin{equation}
\label{eq:single_step}
    \bar{\bfx}=\tilde\bfx+\sigma^2\nabla_{\tilde\bfx}\log p_{\theta}(\tilde\bfx),
\end{equation}
which only requires the knowledge of $p_{\theta}(\tilde\bfx)$ and the ability to sample from it.
However, this approach does not provide a likelihood estimate 
and Eq.~(\ref{eq:single_step}) only works when $q(\tilde\bfx|\bfx)$ is Gaussian (though alternative denoising updates for other smoothing processes could be derived under the Empirical Bayes framework~\citep{raphan2011least}).
Although Eq.~(\ref{eq:single_step}) could provide reasonable denoising results when the smoothing distribution has a small variance, 
$\bar{\bfx}$ obtained in this way is only a point estimation of 
$\bar{\bfx}=\mathbb{E}[\bfx|\tilde{\bfx}]$ 
and does not capture the uncertainty of $p(\bfx|\tilde{\bfx})$.

To invert more general smoothing distributions (beyond Gaussians)
 and to obtain likelihood estimations, we introduce a second autoregressive model $p_{\theta}(\bfx|\tilde \bfx)$. The parameterized joint density $p_{\theta}(\bfx, \tilde{\bfx})$ can then be computed as $p_{\theta}(\bfx, \tilde{\bfx})=p_{\theta}(\bfx|\tilde {\bfx})p_{\theta}(\tilde{\bfx})$. 
To obtain our approximation of $p_{\text{data}}(\bfx)$, we need to integrate over $\tilde {\bfx}$ on the joint distribution $p_{\theta}(\bfx, \tilde{\bfx})$ to obtain $p_{\theta}(\bfx)=\int p_{\theta}(\bfx,\tilde\bfx)d\tilde \bfx$, which is in general intractable. However, we can easily obtain an evidence lower bound (ELBO):
\begin{equation}
    \log p_{\theta}(\bfx)\ge \mathbb{E}_{q(\tilde \bfx|\bfx)}[\log p_{\theta}(\tilde \bfx)]-\mathbb{E}_{q(\tilde \bfx|\bfx)}[\log q(\tilde \bfx|\bfx)] + \mathbb{E}_{q(\tilde \bfx|\bfx)}[\log p_{\theta}(\bfx|\tilde \bfx)].
\end{equation}

Note that when $q(\tilde \bfx|\bfx)$ is fixed, the entropy term $\mathbb{E}_{q(\tilde \bfx|\bfx)}[\log q(\tilde \bfx|\bfx)]$ is a constant with respect to the optimization parameters. Maximizing ELBO on $p_{\text{data}}(\bfx)$ is then equivalent to maximizing:
\begin{equation}
    J(\theta)=\mathbb{E}_{p_{\text{data}}(\bfx)}\bigg[\mathbb{E}_{q(\tilde \bfx|\bfx)}[\log p_{\theta}(\tilde \bfx)]+\mathbb{E}_{q(\tilde \bfx|\bfx)}[\log p_{\theta}(\bfx|\tilde \bfx)]\bigg].
    \label{eq:objective_elbo}
\end{equation}
From \eqref{eq:objective_elbo}, we can see that optimizing the two models $p_{\theta}(\tilde \bfx)$ and $p_{\theta}(\bfx|\tilde \bfx)$ separately via maximum likelihood estimation is equivalent to optimizing $J(\theta)$.

\subsection{Tradeoff in modeling}
\label{sec:3.3}
In general, there is a trade-off between the difficulty of modeling $p_{\theta}(\tilde \bfx)$ and $p_{\theta}(\bfx|\tilde \bfx)$.
To see this, let us consider two extreme cases for the variance of $q(\tilde \bfx|\bfx)$ --- when $q(\tilde \bfx|\bfx)$ has a zero variance and an infinite variance. When $q(\tilde \bfx|\bfx)$ has a zero variance, $q(\tilde \bfx|\bfx)$ is a distribution with all its probability mass at $\bfx$, meaning that no noise is added to the data distribution. 
In this case, modeling the smoothed distribution would be equivalent to modeling $p_{\text{data}}(\bfx)$, which can be hard as discussed above. 
The reverse smoothing process, however, would be easy since $p_{\theta}(\bfx|\tilde \bfx)$ can simply be an identity map to perfectly invert the smoothing process. In the second case when $q(\tilde \bfx|\bfx)$ has an infinite variance, modeling $p(\tilde \bfx)$ would be easy  because all the information about the original data is lost, and
$p(\tilde \bfx)$ 
would be close to the smoothing distribution. 
Modeling $p(\bfx|\tilde \bfx)$, on the other hand, is equivalent to directly modeling $p_{\text{data}}(\bfx)$, which can be challenging.
\begin{figure}
\vspace{-8pt}
    \centering
    \includegraphics[width=\textwidth]{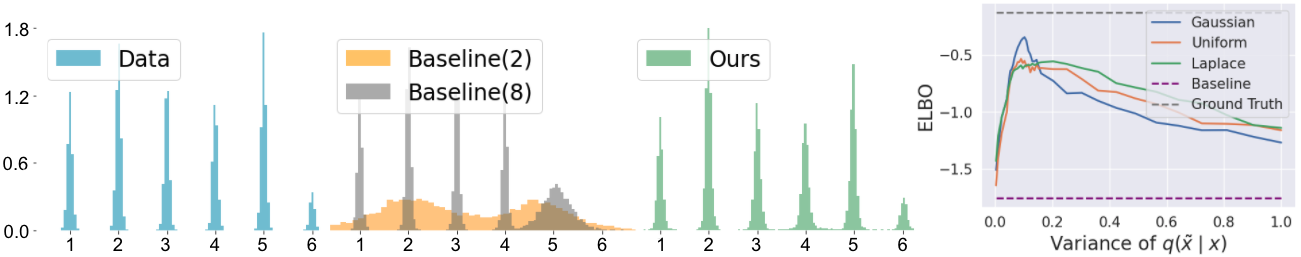}
    \caption{Density estimation on 1-d synthetic dataset. In the second figure, the digit in the parenthesis denotes the number of mixture components used in the baseline mixture of logistics model. In comparison, our model in the third figure uses only 2 mixture of logistics components for each univariate conditional distribution.
    } 
    \vspace{-6pt}
    \label{fig:1d_examples}
\end{figure}  

Thus, the key here is to appropriately choose a smoothing level so that both $q(\tilde \bfx)$ and $p(\bfx|\tilde \bfx)$ can be approximated relatively well by existing autoregressive models. In general, the optimal variance might be hard to find. 
Although one can train $q(\tilde \bfx|\bfx)$ by jointly optimizing ELBO, in practice, we find this approach often assigns a very large variance to $q(\tilde \bfx|\bfx)$, which can trade-off sample quality for better likelihoods on high dimensional image datasets. We find empirically that a pre-specified $q(\tilde \bfx|\bfx)$ chosen by heuristics~\citep{saremi2019neural,garreau2017large} is able to generate much better samples than training $q(\tilde \bfx|\bfx)$ via ELBO. 
In this paper, we will focus on the sample quality and leave the training of $q(\tilde \bfx|\bfx)$ for future work.

\section{Experiments}
In this section, we demonstrate empirically that by appropriately choosing the smoothness level of randomized smoothing, our approach is able to drastically improve the sample quality of existing autoregressive models on several synthetic and real-world datasets while retaining competitive likelihoods on synthetic datasets. We also present results on image inpainting in Appendix~\ref{app:image_inpainting}.

\subsection{Choosing the smoothing distribution}
To help us build insights into the selection of the smoothing distribution $q(\tilde\bfx| \bfx)$, we first focus on a 1-d multi-modal distribution (see \figref{fig:1d_examples} leftmost panel). We use model-based methods to invert the smoothed distribution and provide analysis on ``single-step denoising" in Appendix \ref{app:1-d}.
We start with the exploration of three different types of smoothing distributions -- Gaussian distribution, Laplace distribution, and uniform distribution. For each type of distribution, we perform a grid search to find the optimal variance.
Since our approach requires the modeling of both $p_{\theta}(\tilde\bfx)$ and $p_{\theta}(\bfx|\tilde\bfx)$, we stack $\tilde\bfx$ and $\bfx$ together, and use a MADE model \citep{germain2015made} with a mixture of two logistic components to parameterize $p_{\theta}(\tilde\bfx)$ and $p_{\theta}(\bfx|\tilde\bfx)$ at the same time. 
For the baseline model, we train a mixture of logistics model directly on $p_{\text{data}}(\bfx)$. We compare the results in the middle two panels in \figref{fig:1d_examples}.

We find that although the baseline with eight logistic components has the capacity to perfectly model the multi-modal data distribution, which has six modes, the baseline model still fails to do so. We believe this can be caused by optimization or initialization issues for modeling a distribution with a high Lipschitz constant. Our method, on the other hand, demonstrates more robustness by successfully modeling the different modes in the data distribution even when using only two mixture components for both $p_{\theta}(\tilde\bfx)$ and $p_{\theta}(\bfx|\tilde\bfx)$.

For all the three types of smoothing distributions, we observe a reverse U-shape correlation between the variance of $q(\tilde\bfx|\bfx)$ and ELBO values --- with ELBO first increasing as the variance increases and then decreasing as the variance grows beyond a certain point. The results match our discussion on the trade-off between modeling $p_{\theta}(\tilde\bfx)$ and $p_{\theta}(\bfx|\tilde\bfx)$ in Section~\ref{sec:3.3}. 
We notice from the empirical results that Gaussian smoothing is able to obtain better ELBO than the other two distributions. Thus, we will use $q(\tilde \bfx|\bfx)=\mathcal{N}(\tilde \bfx|\bfx, \sigma^2I)$ for the later experiments.

\begin{table}
\begin{center}
\caption{Negative log-likelihoods on 2-d synthetic datasets (lower is better). We compare with MADE~\citep{germain2015made}, RealNVP~\citep{dinh2016density}, CIF-RealNVP~\citep{cornish2019relaxing}. 
}
\resizebox{\textwidth}{!}{
    \begin{tabular}{lccccc}
        \toprule
        Dataset &RealNVP &CIF-RealNVP &MADE (3 mixtures) &MADE (6 mixtures) &Ours (3 mixtures)\\
        \midrule
        Rings &$2.81$  &$2.81$ &$3.26$ &$2.81$ &$\mathbf{2.71}$\\
        Olympics &$1.80$ &$1.74$ &$1.27$ &$\mathbf{0.80}$ &$\mathbf{0.80}$\\
        \bottomrule
    \end{tabular}
    }
    \label{tab:2d_likelihood}
\end{center}
\end{table}

\subsection{2-D synthetic datasets}
\label{sec:2-d-dataset}
\vspace{-10pt}
\begin{figure}
    \centering
    \begin{subfigure}{0.16\textwidth}
    \includegraphics[width=\textwidth]{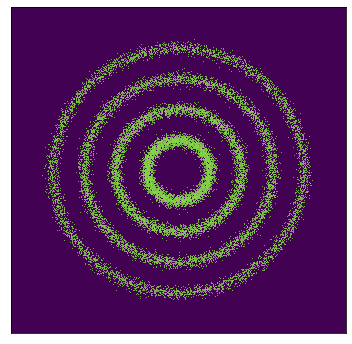}
    \caption{Rings}
    \label{fig:2d_data}
    \end{subfigure}
    \begin{subfigure}{0.16\textwidth}
    \includegraphics[width=\textwidth]{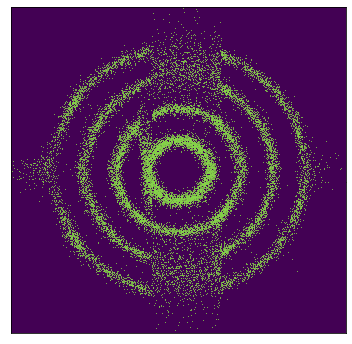}
    \caption{MADE (6)}
    \label{fig:2d_baseline}
    \end{subfigure}
    \begin{subfigure}{0.16\textwidth}
    \includegraphics[width=\textwidth]{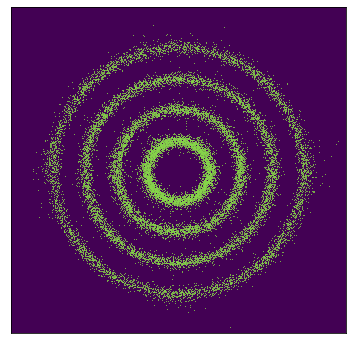}
    \caption{Ours (3)}
    \label{fig:2d_denoised_samples}
    \end{subfigure}
    \begin{subfigure}{0.16\textwidth}
    \includegraphics[width=\textwidth]{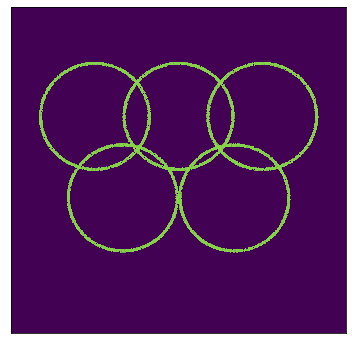}
    \caption{Olympics}
    \label{fig:2d_data_2}
    \end{subfigure}
    \begin{subfigure}{0.16\textwidth}
     \includegraphics[width=\textwidth]{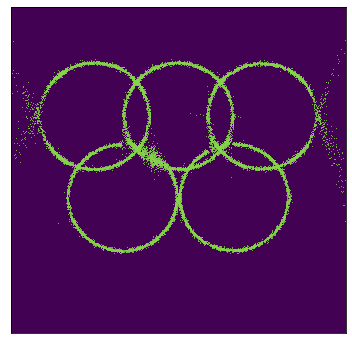}
    \caption{MADE (6)}
    \label{fig:2d_baseline_2}
    \end{subfigure}
    \begin{subfigure}{0.16\textwidth}
    \includegraphics[width=\textwidth]{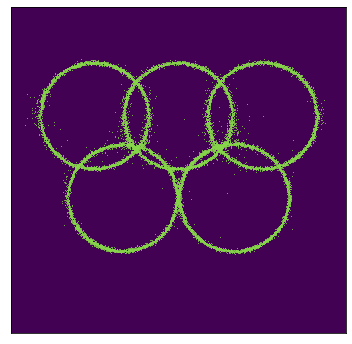}
    \caption{Ours (3)}
    \label{fig:2d_noisy_samples_2}
    \end{subfigure}
    \caption{Samples on 2-d synthetic datasets. We use a MADE model with comparable number of parameters for both our method and the baseline. Our model uses 3 mixture of logistics, while the baseline uses 6 ({more}) mixture of logistics.
    }
    \label{fig:2d_example}
\end{figure}

In this section, we consider two challenging 2-d multi-modal synthetic datasets (see \figref{fig:2d_example}). We focus on model-based denoising methods and present discussion on ``single-step denoising" in Appendix~\ref{app:2-d_denoise}.
We use a MADE model with comparable number of total parameters for both the baseline and our approach. 
For the baseline, we train the MADE model directly on the data. For our randomized smoothing model, we choose $q(\tilde \bfx|\bfx)=\mathcal{N}(\tilde \bfx|\bfx, 0.3^2I)$ to be the smoothing distribution. We observe that with this randomized smoothing approach, our model is able to generate better samples than the baseline (according to a human observer) 
even when using less logistic components (see \figref{fig:2d_example}). We provide more analysis on the model's performance in Appendix~\ref{app:2-d_denoise}. 
We also provide the negative log-likelihoods in \tabref{tab:2d_likelihood}.

\subsection{Image experiments}
In this section, we focus on three common image datasets, namely MNIST, CIFAR-10~\citep{krizhevsky2009learning} and CelebA~\citep{liu2015deep}. We select $q(\tilde \bfx|\bfx)=\mathcal{N}(\tilde\bfx|\bfx,\sigma^2I)$ to be the smoothing distribution. We use PixelCNN++~\citep{salimans2017pixelcnn++} as the model architecture for both $p_{\theta}(\tilde \bfx)$ and $p_{\theta}(\bfx|\tilde \bfx)$. We provide more details about settings in Appendix~\ref{app:image}.

\begin{figure}[t]
\vspace{-2pt}
    \centering
    \begin{subfigure}{0.23\textwidth}
    \adjincludegraphics[width=\textwidth,trim={0 {0.6\height} {0.57\width} 0},clip]{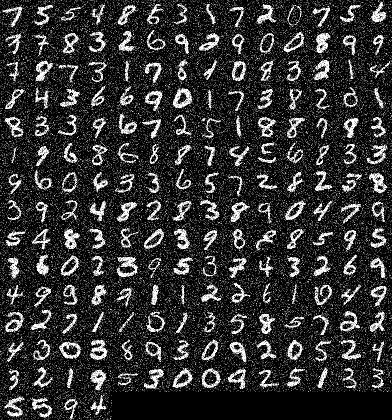}
    \end{subfigure}
    ~
    \begin{subfigure}{0.23\textwidth}
    \adjincludegraphics[width=\textwidth,trim={0 {0.6\height} {0.57\width} 0},clip]{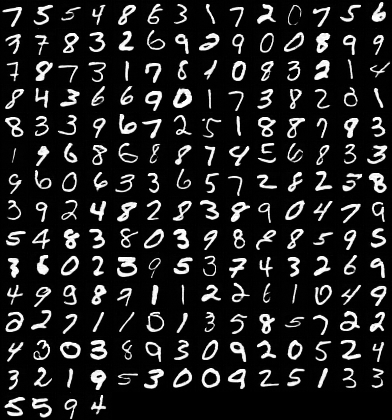}
    \end{subfigure}
    ~
    \begin{subfigure}{0.23\textwidth}
    \adjincludegraphics[width=\textwidth,trim={0 {0.6\height} {0.57\width} 0},clip]{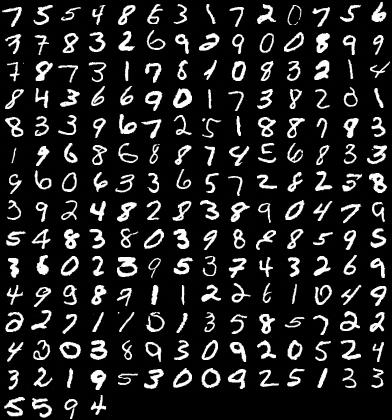}
    \end{subfigure}
    ~
    \begin{subfigure}{0.23\textwidth}
    \adjincludegraphics[width=\textwidth,trim={0 {0.6\height} {0.57\width} 0},clip]{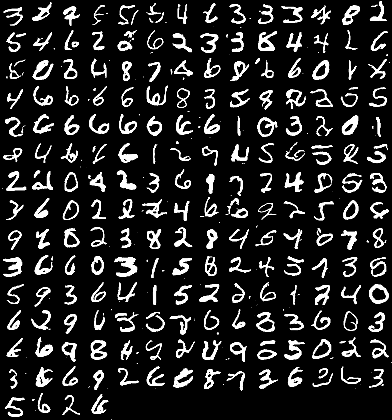}
    \end{subfigure}
    \hfill
    \begin{subfigure}{0.23\textwidth}
    \adjincludegraphics[width=\textwidth,trim={0 {0.25\height} {0.25\width} 0},clip]{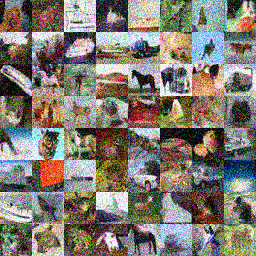}
    \end{subfigure}
    ~
    \begin{subfigure}{0.23\textwidth}
    \adjincludegraphics[width=\textwidth,trim={0 {0.25\height} {0.25\width} 0},clip]{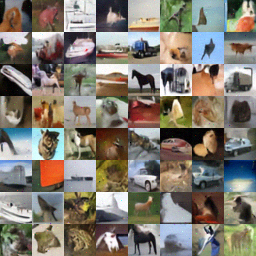}
    \end{subfigure}
    ~
    \begin{subfigure}{0.23\textwidth}
    \adjincludegraphics[width=\textwidth,trim={0 {0.25\height} {0.25\width} 0},clip]{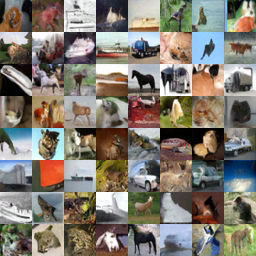}
    \end{subfigure}
    ~
    \begin{subfigure}{0.23\textwidth}
    \adjincludegraphics[width=\textwidth,trim={0 {0.25\height} {0.25\width} 0},clip]{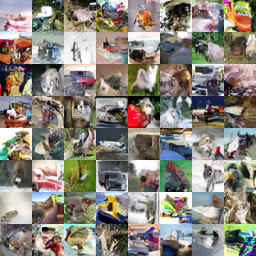}
    \end{subfigure}
    \hfill
    \begin{subfigure}{0.23\textwidth}
    \adjincludegraphics[width=\textwidth,trim={0 {0.25\height} {0.25\width} 0},clip]{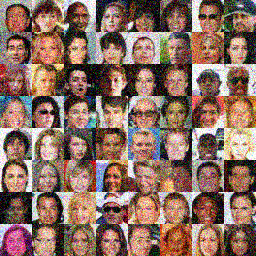}
    \caption*{Column 1}
    \end{subfigure}
    ~
    \begin{subfigure}{0.23\textwidth}
    \adjincludegraphics[width=\textwidth,trim={0 {0.25\height} {0.25\width} 0},clip]{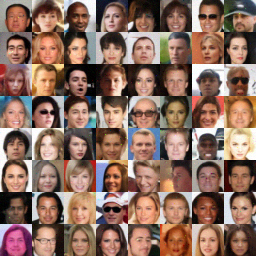}
    \caption*{Column 2}
    \end{subfigure}
    ~
    \begin{subfigure}{0.23\textwidth}
    \adjincludegraphics[width=\textwidth,trim={0 {0.25\height} {0.25\width} 0},clip]{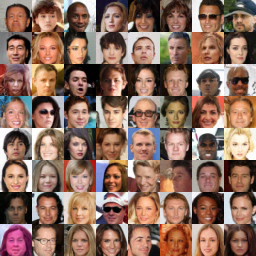}
    \caption*{Column 3}
    \end{subfigure}
    ~
    \begin{subfigure}{0.23\textwidth}
    \adjincludegraphics[width=\textwidth,trim={0 {0.57\height} {0.5\width} 0},clip]{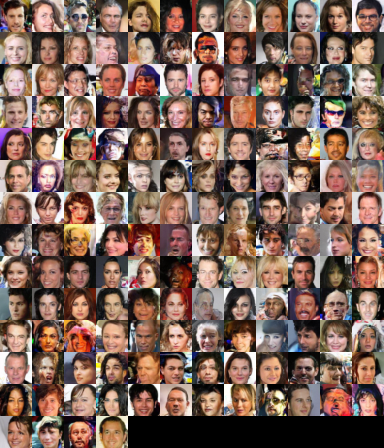}
    \caption*{Column 4}
    \end{subfigure}
    \caption{From left to right: \textbf{Column 1:} samples from $p_{\theta}(\tilde\bfx)$. \textbf{Column 2:} ``single-step denoising" samples from $p_{\theta}(\tilde\bfx)$. \textbf{Column 3:} samples from $p_{\theta}(\bfx|\tilde \bfx)$. \textbf{Column 4:} samples from the baseline PixelCNN++ model. 
    Samples in Column 2 (``single-step denoising") contain wild pixels and are less realistic compared to samples in Column 3 (modeled by another PixelCNN++).
    None of the samples are conditioned on class labels. %
    }
    \vspace{-6pt}
    \label{fig:samples}
\end{figure}

\textbf{Image generation.}
For image datasets, we select the $\sigma$ of $q(\tilde \bfx|\bfx)=\mathcal{N}(\tilde \bfx|\bfx, \sigma^2I)$ according to analysis in \citep{saremi2019neural} (see Appendix~\ref{app:image} for more details). 
Since $q(\tilde\bfx|\bfx)$ is a Gaussian distribution, we can apply ``single-step denoising" to reverse the smoothing process for samples drawn from $p_{\theta}(\tilde\bfx)$. In this case, the model $p_{\theta}(\bfx|\tilde\bfx)$ is not required for sampling since the gradient of $p_{\theta}(\tilde\bfx)$ can be used to denoise samples (also from  $p_{\theta}(\tilde\bfx)$) (see \eqref{eq:single_step}). 
We present smoothed samples from $p_{\theta}(\tilde\bfx)$, reversed smoothing samples processed by ``single-step denoising" and processed by  $p_{\theta}(\bfx|\tilde\bfx)$ in \figref{fig:samples}.
For comparison, we also present samples from a PixelCNN++ with parameters comparable to the sum of total parameters of $p_{\theta}(\tilde\bfx)$ and $p_{\theta}(\bfx|\tilde\bfx)$.
We find that by using this randomized smoothing approach, we are able to drastically improve the sample quality of PixelCNN++ (see the rightmost panel in \figref{fig:samples}).
We note that with only $p_{\theta}(\tilde\bfx)$, a PixelCNN++ optimized on the smoothed data, we already obtain more realistic samples compared to the original PixelCNN++ method. However, 
$p_{\theta}(\bfx|\tilde\bfx)$ is needed to compute the likelihood lower bounds.
We report the sample quality evaluated by Fenchel Inception Distance (FID~\citep{heusel2017gans}), Kernel Inception Distance (KID ~\citep{binkowski2018demystifying}), and Inception scores~\citep{salimans2016improved} in \tabref{tab:score}. 
Although our method obtains better samples compared to the original PixelCNN++, our model has worse likelihoods as evaluated in BPDs. We believe this is because likelihood and sample quality are not always directly correlated as discussed in \citet{theis2015note}. We also tried training the variance for $q(\tilde \bfx|\bfx)$ by jointly optimizing ELBO. Although training the variance can produce better likelihoods, it does not generate samples with comparable quality as our method (\ie choosing variance by heuristics). Thus, it is hard to conclusively determine what is the best way of choosing $q(\tilde \bfx|\bfx)$. We provide more image samples in Appendix \ref{app:image_samples} and nearest neighbors analysis in Appendix~\ref{app:neighbor}.

\begin{table}[H]
\begin{center}
\begin{adjustbox}{max width=\linewidth}
    \begin{tabular}{lcccc}
        \toprule
        Model & Inception $\uparrow$ & FID $\downarrow$ & KID $\downarrow$ &BPD $\downarrow$\\
        \midrule
        PixelCNN~\citep{oord2016pixel} & $4.60$ & $65.93$ &- & $3.14$\\
        PixelIQN~\citep{ostrovski2018autoregressive} & $5.29$ & $49.46$ &- &-\\
        PixelCNN++~\citep{salimans2017pixelcnn++} & $5.30$ & $54.25$ &0.046  & $\mathbf{2.92}$\\
        EBM~\citep{du2019implicit} & $6.02$ & $40.58$ &- &-\\
        i-ResNet~\citep{behrmann2019invertible} &- &65.01 &- & 3.45\\
        MADE~\citep{germain2015made} &- &- &- & 5.67\\
        Glow~\citep{kingma2018glow} &- &46.90 &- &3.35 \\
        Single-step (Ours) &$7.50\pm .08$ &57.53 &0.052 &- \\
        Two-step (Ours) &$\textbf{7.84} \pm .07$  &$\mathbf{29.83}$ &$\textbf{0.022}$ &$\le 3.53$\\
        \bottomrule
    \end{tabular} 
 \end{adjustbox}
\end{center}
\caption{Inception, FID and KID scores for unconditional CIFAR-10. ``Single-step" samples are generated solely by $p_{\theta}(\tilde\bfx)$.  ``Two-step" samples are generated by sampling from $p_{\theta}(\tilde\bfx)$ and then denoised by $p_{\theta}(\bfx|\tilde\bfx)$. Although samples from ``single-step" might appear visually similar to samples from the ``two-step" method, there is still a gap between their Inception, FID and KID scores. 
}
\vspace{-6pt}
\label{tab:score}
\end{table}

\if0
\begin{table}
\adjustbox{valign=t}
{\begin{minipage}[t]{0.35\linewidth}
\begin{center}
\begin{figure}[H]
    \begin{subfigure}{0.9\linewidth}
    \includegraphics[width=\textwidth]{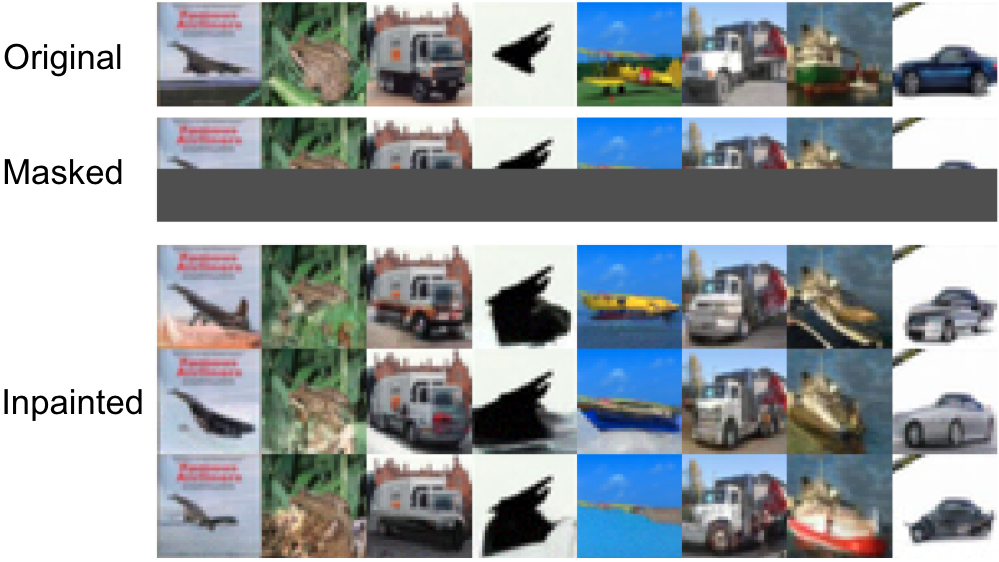}
    \caption{CIFAR10 inpainting}
    \label{fig:cifar10_inpaint}
    \end{subfigure}
    \vfill
    \begin{subfigure}{0.9\linewidth}
    \includegraphics[width=\textwidth]{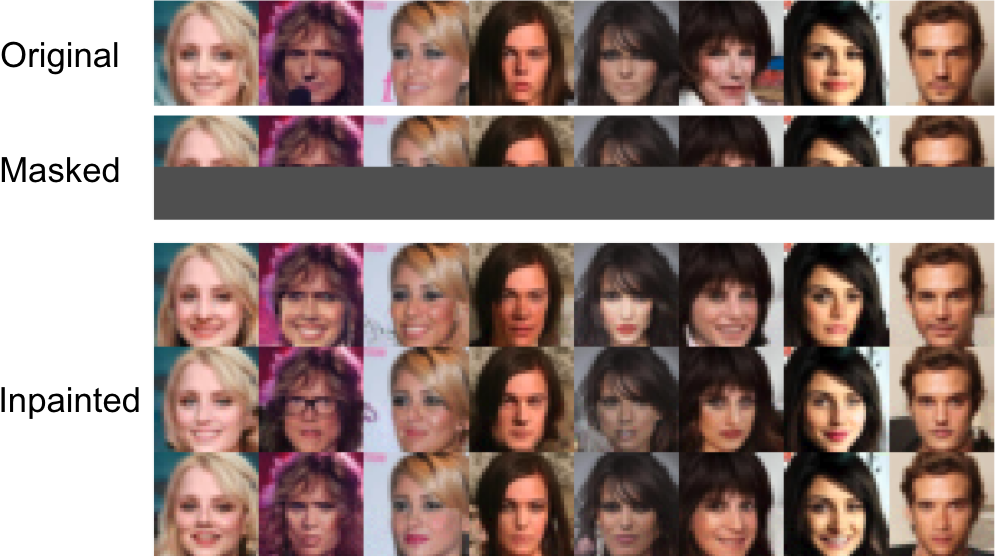}
    \caption{CelebA inpainting}
    \label{fig:celeba_inpaint}
    \end{subfigure}
    \caption{Inpainting results from our two-step method.\chenlin{todo: fix caption}} 
\end{figure}
\end{center}
\label{tab:score}
\end{minipage}}
\adjustbox{valign=t}
{\begin{minipage}[t]{0.63\linewidth}
\begin{center}
    \begin{tabular}{lccc}
        \toprule
        Model & Inception & FID &BPD\\
        \midrule
        PixelCNN~\citep{oord2016pixel} & $4.60$ & $65.93$ & $\mathbf{3.14}$\\
        PixelIQN~\citep{ostrovski2018autoregressive} & $5.29$ & $49.46$ &-\\
        EBM~\citep{du2019implicit} & $6.02$ & $40.58$ &-\\
        i-ResNet~\citep{behrmann2019invertible} &- &65.01 & 3.45\\
        MADE~\citep{germain2015made} &- &- & 5.67\\
        Glow~\citep{kingma2018glow} &- &46.90 &3.35 \\
        WGAN-GP~\citep{gulrajani2017improved} & $\mathbf{7.86 \pm .07}$ & $36.4$ &-\\
        DCGAN~\citep{radford2015unsupervised} &6.16 &37.7 &- \\
        Single-step (Ours) &$7.50\pm .08$ &57.53 &- \\
        Two-step (Ours) &$7.84 \pm .07$  &$\mathbf{29.83}$ &$\le 3.53$\\
        \bottomrule
    \end{tabular} 
    \caption{Inception and FID scores for unconditional CIFAR-10. ``Single-step" samples are generated solely by $p_{\theta}(\tilde\bfx)$ using ``single-step denoising".  ``Single-step" samples are generated by drawing samples from $p_{\theta}(\tilde\bfx)$, and ``denoised" by $p_{\theta}(\bfx|\tilde\bfx)$.
    }
    \label{tab:score}
\end{center}
\end{minipage}}
\end{table}
\fi
\if0
\begin{minipage}[c]{0.35\linewidth}
\begin{figure}[H]
    \begin{subfigure}{0.9\linewidth}
    \includegraphics[width=\textwidth, trim=0 0 0 0, clip]{figures/cifar10_inpaint_new.png}
    \caption{CIFAR10 inpainting}
    \label{fig:cifar10_inpaint}
    \end{subfigure}
    \vfill
    \begin{subfigure}{0.9\linewidth}
    \includegraphics[width=\textwidth, trim=0 0 0 0, clip]{figures/celeba_inpaint_new.png}
    \caption{CelebA inpainting}
    \label{fig:celeba_inpaint}
    \end{subfigure}
    \caption{Inpainting results from our two-step method. The bottom half of the images are masked for inpainting.} 
\end{figure}
\label{tab:score}
\end{minipage}
\begin{minipage}[c]{0.63\linewidth}
\begin{table}[H]
    \caption{Inception and FID scores for unconditional CIFAR-10. ``Single-step" samples are generated solely by $p_{\theta}(\tilde\bfx)$.  ``Two-step" samples are generated by drawing samples from $p_{\theta}(\tilde\bfx)$ and then ``denoised" by $p_{\theta}(\bfx|\tilde\bfx)$. Although the samples from ``single-step" might appear visually similar to samples from the ``two-step" method, there is still a gap between their Inception and FID scores.}
\begin{center}
\begin{adjustbox}{max width=\linewidth}
    \begin{tabular}{lccc}
        \toprule
        Model & Inception & FID &BPD\\
        \midrule
        PixelCNN~\citep{oord2016pixel} & $4.60$ & $65.93$ & $\mathbf{3.14}$\\
        PixelIQN~\citep{ostrovski2018autoregressive} & $5.29$ & $49.46$ &-\\
        EBM~\citep{du2019implicit} & $6.02$ & $40.58$ &-\\
        i-ResNet~\citep{behrmann2019invertible} &- &65.01 & 3.45\\
        MADE~\citep{germain2015made} &- &- & 5.67\\
        Glow~\citep{kingma2018glow} &- &46.90 &3.35 \\
        Single-step (Ours) &$7.50\pm .08$ &57.53 &- \\
        Two-step (Ours) &$\textbf{7.84} \pm .07$  &$\mathbf{29.83}$ &$\le 3.53$\\
        \bottomrule
    \end{tabular} 
    \label{tab:score}
 \end{adjustbox}
\end{center}
\end{table}
\end{minipage}
\fi

\section{Additional experiments on normalizing flows}
In this section, we demonstrate empirically on 2-d synthetic datasets that randomized smoothing techniques can also be applied to improve the sample quality of normalizing flow models~\citep{rezende2015variational}. We focus on RealNVP~\citep{dinh2016density}. 
We compare the RealNVP model trained with randomized smoothing, where we use $p_{\theta}(\bfx|\tilde\bfx)$ (also a RealNVP) to revert the smoothing process, with a RealNVP trained with the original method but with comparable number of parameters.
We observe that smoothing is able to improve sample quality on the datasets we consider (see \figref{fig:2d_flow}) while also obtaining competitive likelihoods. On the checkerboard dataset, our method has negative log-likelihoods 3.64 while the original RealNVP has 3.72; on the Olympics dataset, our method has negative log-likelihoods 1.32 while the original RealNVP has 1.80.  
This example demonstrates that randomized smoothing techniques can also be applied to normalizing flow models. 

\begin{figure}
\vspace{-16pt}
    \centering
    \begin{subfigure}{0.16\textwidth}
    \includegraphics[width=\textwidth]{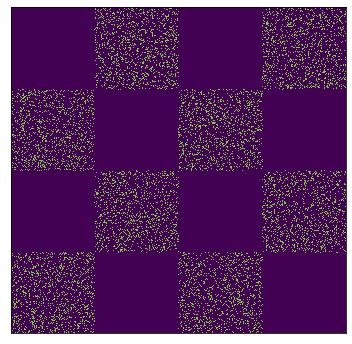}
    \caption{Data}
    \end{subfigure}
    \begin{subfigure}{0.16\textwidth}
    \includegraphics[width=\textwidth]{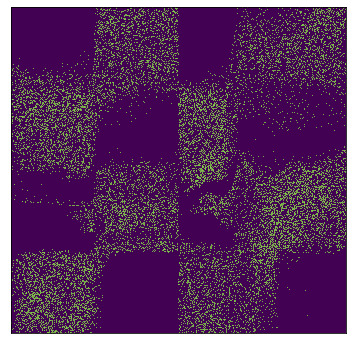}
    \caption{RealNVP}
    \label{fig:2d_denoised_samples}
    \end{subfigure}
    \begin{subfigure}{0.16\textwidth}
    \includegraphics[width=\textwidth]{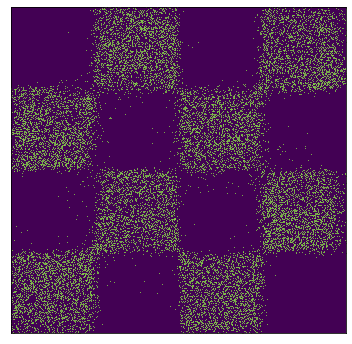}
    \caption{Ours}
    \end{subfigure}
    \begin{subfigure}{0.16\textwidth}
    \includegraphics[width=\textwidth]{figures/2d_visualization/olympics_data.png}
    \caption{Data}
    \end{subfigure}
    \begin{subfigure}{0.16\textwidth}
    \includegraphics[width=\textwidth]{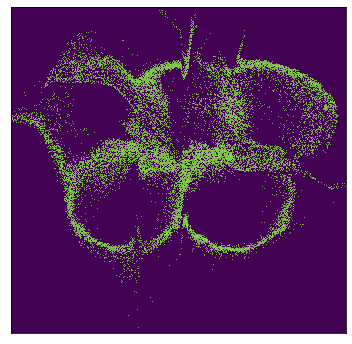}
    \caption{RealNVP}
    \end{subfigure}
    \begin{subfigure}{0.16\textwidth}
    \includegraphics[width=\textwidth]{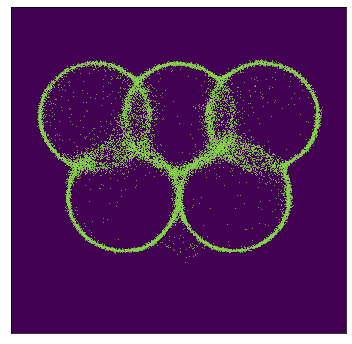}
    \caption{Ours}
    \end{subfigure}
    \vspace{-6pt}
    \caption{RealNVP samples on 2-d synthetic datasets. The RealNVP model trained with randomized smoothing is able to generate better samples according to human observers.
    }
    \label{fig:2d_flow}
\end{figure}
\section{Related Work}
Our approach shares some similarities with denoising autoencoders~(DAE,~\cite{vincent2008extracting}) which recovers a clean observation from a corrupted one. However, unlike DAE which has a trainable encoder and a fixed prior distribution, our approach fixes the encoder and models the prior using an autoregressive model. Generative stochastic networks (GSN,~\cite{bengio2014deep}) use DAEs to train a Markov chain whose equilibrium distribution matches the data distribution. However, GSN needs to start the chain from a sample that is very close to the training distribution. %
Denoising diffusion model~\citep{sohl2015deep,ho2020denoising} and NCSN~(\cite{song2019generative,song2020improved}) address the issue of GSNs by considering a sequence of distributions corresponding to data corrupted with various noise levels. By setting multiple noise levels that are close to each other, the sample from the previous level can serve as a proper initialization for the sampling process at the next level. This way, the model can start from a distribution that is easy to model and gradually move to the desired distribution. However, due to the large number of noise levels, such approaches require many steps for the chain to converge to the right data distribution. 

In this paper, we instead propose to use \textit{only one level} of smoothing by modeling each step with a powerful autoregressive model instead of deterministic autoencoders. Motivated by the success of ``randomized smoothing" techniques in adversarial defense~\citep{cohen2019certified}, we perform randomized smoothing directly on the data distribution. %
Unlike denoising score matching~\citep{vincent2011connection}, a technique closely related to denoising diffusion models and NCSN, which requires the perturbed noise to be a Gaussian distribution, we are able to work with different noise distributions. %

Our smoothing method is also relevant to ``dequantization'' approaches that are common in normalizing flow models, where the discrete data distribution is converted to a continuous one by adding continuous noise~\citep{uria2013rnade, ho2019flow++}. 
However the added noise for ``dequantization'' in flows is often indistinguishable to human eyes, and the reverse ``dequantization'' process is often ignored.
In contrast, we consider noise scales that are significantly larger and thus a denoising process is required.

Our method is also related to ``quantization" approaches which reduce the number of ``significant" bits that are modeled by a generative model~\citep{kingma2018glow,menick2018generating}. For instance, Glow~\citep{kingma2018glow} only models the 5 most significant bits of an image, which improves the visual quality of samples but decreases color fidelity.  SPN~\citep{menick2018generating} introduces another network to predict the remaining bits conditioned on the 3 most significant bits already modeled. Modeling the most significant bits can be understood as capturing a data distribution perturbed by bit-wise correlated noise, similar to modeling  smoothed data in our method. Modeling the remaining bits conditioned on the most significant ones in SPN is then similar to denoising. 
However, unlike these quantization approaches which process an image at the ``significant" bits level,
we apply continuous data independent Gaussian noise to the entire image with a different motivation to smooth the data density function.

\section{Discussion}
In this paper, we propose to incorporate randomized smoothing techniques into autoregressive modeling. By choosing the smoothness level appropriately, this seemingly simple approach is able to drastically improve the sample quality of existing autoregressive models on several synthetic and real-world datasets while retaining reasonable likelihoods. Our work provides insights into how recent adversarial defense techniques can be leveraged to building more robust generative models. Since we apply randomized smoothing technique directly to the target data distribution other than the model, we believe our approach is also applicable to other generative models such as 
variational autoencoders (VAEs) and generative adversarial networks (GANs). %

\section*{Acknowledgements}
The authors would like to thank Kristy Choi for reviewing the draft of the paper. This research was supported by 
NSF (\#1651565, \#1522054, \#1733686), ONR (N00014-19-1-2145), AFOSR (FA9550-19-1-0024), ARO, and Amazon AWS.

\bibliography{iclr2021_conference}
\bibliographystyle{iclr2021_conference}

\appendix
\newpage
\appendix
\section{Proofs}
\label{app:proofs}
\lipschitz*
\begin{proof}
First, we have that:
\begin{align}
    \abs{\nabla_\vx p(\vx)} = \abs{p(\vx) \nabla_\vx \log p(\vx)} \leq Lip(p)
\end{align}
and if we assume symmetry, i.e. $q(\vx | \tilde{\vx}) = q(\tilde{\vx} | \vx)$ then by integration by parts we have:
\begin{align}
    \nabla_{\tilde{\vx}} q(\tilde{\vx}) = \E_{p(\vx)}[\nabla_{\tilde{\vx}} q(\tilde{\vx} | \vx)] = \E_{p(\vx)}[\nabla_{\vx} q(\vx | \tilde{\vx})] = -\E_{p(\vx)}[q(\vx | \tilde{\vx}) \nabla_\vx \log p(\vx) ]
\end{align}
Therefore, %
\begin{align}
   Lip(q) & = \max_{\tilde{\vx}} \abs{-\E_{p(\vx)}[q(\vx | \tilde{\vx}) \nabla_\vx \log p(\vx)]}  = \max_{\tilde{\vx}}
    \abs{\sum_\vx q(\vx | \tilde{\vx}) p(\vx) \nabla_\vx \log p(\vx)} \\
     & \leq \max_{\tilde{\vx}} \abs{
    \sum_\vx q(\vx | \tilde{\vx}) Lip(p)} =  Lip(p) \max_{\tilde{\vx}} \abs{
    \sum_\vx q(\vx | \tilde{\vx})}  = Lip(p)
\end{align}
which proves the result.
\end{proof}

\begin{restatable*}[Formal]{proposition}{regular}
Given a $D$-dimensional data distribution $p_{\text{data}}(\bfx)$ and model distribution $p_\theta(\bfx)$, assume that the smoothing distribution $q(\tilde\bfx|\bfx)$ satisfies:
\begin{itemize}
\item $\log p_\theta$ is infinitely differentiable on the support of $p_\theta(\bfx)$
 \item $q(\tilde\bfx|\bfx)$ is symmetric (\ie  $q(\tilde \bfx|\bfx)=q(\bfx|\tilde \bfx)$)
 \item $q(\tilde\bfx|\bfx)$ is stationary (\ie  translation invariant) 
 \item $q(\tilde\bfx|\bfx)$ is bounded and fully supported on $\mathbb{R}^D$
 \item $q(\tilde\bfx|\bfx)$ is element-wise independent
 \item $\E_{q(\tilde\bfx|\bfx)}[(\tilde\bfx-\bfx)^2]$ is bounded, and $\E_{q(\tilde\bfx|\bfx)}[(\tilde x_i-x_i)^2]=\eta$ at each dimension $i$.
\end{itemize}
Denote $\bfe=\tilde\bfx-\bfx$, then
\begin{equation*}
\small
    \E_{p_{\text{data}}(\bfx)}\E_{ q(\tilde\bfx|\bfx)}[\log p_{\theta}(\tilde\bfx)] = \E_{p_{\text{data}}(\bfx)}\left[\log p_\theta(\bfx) + \frac{\eta}{2} \sum_{i} \frac{\partial^2 \log p_\theta}{\partial x_i^2}\right] + \int \int o(\bfe^2) \pdata(\bfx) p(\bfe) \diff \bfx \diff \bfe,
\end{equation*}
where $o(\bfe^2):\mathbb{R}^D\to\mathbb{R}$ is a function of $\bfe$ such that $\lim_{\bfe\to 0}\frac{o(\bfe^2)}{\bfe^2}=0$. 
Thus when $\int \int o(\bfe^2) \pdata(\bfx) p(\bfe) \diff \bfx \diff \bfe\to 0$, we have
\begin{equation}
    \E_{p_{\text{data}}(\bfx)}\E_{ q(\tilde\bfx|\bfx)}[\log p_{\theta}(\tilde\bfx)]\to \E_{p_{\text{data}}(\bfx)}\left[\log p_\theta(\bfx) + \frac{\eta}{2} \sum_{i} \frac{\partial^2 \log p_\theta}{\partial x_i^2}\right].
\end{equation}
\end{restatable*}

\begin{proof}
To see this, we first note that the new training objective for the smoothed data distribution is $\E_{p_{\text{data}}(\bfx)}\E_{ q(\tilde\bfx|\bfx)}[\log p_{\theta}(\tilde\bfx)]$. 
Let $\bfe=\tilde\bfx-\bfx$, because of the assumptions we have, the PDF function $q(\tilde\bfx | \bfx)$ can be reparameterized as $p(\bfe)$ which satisfies: $p$ is bounded and fully supported on $\mathbb{R}^D$; $p$ is element-wise independent and $\E_{p(\bfe)}[\eps_i^2]=\E_{p(\eps_i)}[\eps_i^2]=\eta$ at each dimension $i$ ($i=1,...,D$).
Then we have
\begin{equation}
\label{eq:regularization}
    \E_{p_{\text{data}}(\bfx)}\E_{ q(\tilde\bfx|\bfx)}[\log p_{\theta}(\tilde\bfx)]=
    \int \int \log p_\theta(\bfx + \bfe) p_{\text{data}}(\bfx) p(\bfe) \diff \bfx \diff \bfe,
\end{equation}
Using Taylor expansion, we have:
$$
\log p_\theta(\bfx + \bfe) = \log p_\theta(\bfx) + \sum_i \epsilon_i \frac{\partial \log p_\theta}{\partial x_i} + \frac12 \sum_{i,j} \epsilon_i \epsilon_j \frac{\partial^2 \log p_\theta}{\partial x_i \partial x_j} + o(\bfe^2).
$$
Since $\bfe$ is independent of $\bfx$ and 
$$
\int_{-\infty}^{\infty}\epsilon_i \diff \epsilon_i = 0, \quad \int_{-\infty}^{\infty} \int_{-\infty}^{\infty} \epsilon_i \epsilon_j \diff \epsilon_i \diff \epsilon_j = \delta_{i,j} \eta,
$$
where $\delta_{i,j}$ is the Kronecker delta function, the right hand side of
\Eqref{eq:regularization} becomes
\begin{align}
    & \ \E_{p_{\text{data}}(\bfx)}[\log p_\theta(\bfx)]+\int\int \left(\frac12 \sum_{i} \epsilon_i^2 \frac{\partial^2 \log p_\theta}{\partial x_i^2}
    + o(\bfe^2) \right) p_{\text{data}}(\bfx) p(\bfe) \diff \bfx \diff \bfe \\
    = & \ \E_{p_{\text{data}}(\bfx)}\left[\log p_\theta(\bfx) + \frac{\eta}{2} \sum_{i} \frac{\partial^2 \log p_\theta}{\partial x_i^2}\right]+\int \int  o(\bfe^2) p_{\text{data}}(\bfx) p(\bfe) \diff \bfx \diff \bfe.
\end{align}
When
\begin{equation}
    \int \int o(\bfe^2) p_{\text{data}}(\bfx) p(\bfe) \diff \bfx \diff \bfe\to 0,
\end{equation}
we have
\begin{align}
    \E_{p_{\text{data}}(\bfx)}\E_{ q(\tilde\bfx|\bfx)}[\log p_{\theta}(\tilde\bfx)]\to  \E_{p_{\text{data}}(\bfx)}\left[\log p_\theta(\bfx) + \frac{\eta}{2} \sum_{i} \frac{\partial^2 \log p_\theta}{\partial x_i^2}\right].
\end{align}
\end{proof}

\section{Denoising experiments}
\label{app:2-d}
\subsection{Analysis on 1-d denoising}
\label{app:1-d}
To provide more insights into denoising, we first study ``single-step denoising" (see \eqref{eq:single_step}) on a 1-d dataset. We choose the data distribution to be a two mixture of Gaussian distribution $0.5\mathcal{N}(-0.3, 0.1^2)+0.5\mathcal{N}(0.3, 0.1^2)$ and the smoothing distribution to be $q(\tilde x|x)=\mathcal{N}(\tilde x|x, 0.3^2)$ (see \figref{fig:app:single_step_denoising_data}). Since the convolution of two Gaussian distributions is also a Gaussian distribution, the smoothed data is a mixture of Gaussian distribution given by $0.5\mathcal{N}(-0.3, 0.1^2+0.3^2)+0.5\mathcal{N}(0.3, 0.1^2+0.3^2)$. The ground truth of $\nabla_{\tilde x}\log p(\tilde x)$ can then be calculated in closed form. Thus, given the smoothed data $\tilde x$, we can calculate the ground truth $\nabla_{\tilde x}\log p(\tilde x)$ in \eqref{eq:single_step} and obtain $\bar{x}$ using ``single-step denoising". We visualize the denoising results in \figref{fig:app:single_step_denoising_result}. We find that the low density region between the two modes in $p_{\text{data}}(x)$ are not modeled properly in \figref{fig:app:single_step_denoising_result}. However, this is very expected since ``single-step denosing" uses $\bar{x}=\mathbb{E}[x|\tilde x]$ as the substitute for the denoised result. When the smoothing distribution has a large variance (like in \figref{fig:app:single_step_denoising_data} where the smoothed data has merged into a one mode distribution), datapoints like $\tilde x_0$ in the middle low density region of $p_{\text{data}}(x)$ can have high density in the smoothed distribution. 
Since $\tilde x_0$, as well as other points in the middle low density region of $p_{\text{data}}(x)$, can come from both modes of $p_{\text{data}}(x)$ with high probability before the smoothing process (see \figref{fig:app:ground_truth_denoising_data_11}), the denoised $\bar{x}=\mathbb{E}[x|\tilde x=\tilde x_0]$ can still be located in the middle low density region (see \figref{fig:app:single_step_denoising_result}). Since a large proportion of the smoothed data is located in the middle low density region of $p_{\text{data}}(x)$, we would expect certain proportion of the density to remain in the low density region after ``single-step denoising" just as shown in \figref{fig:app:single_step_denoising_result}. However, when the smoothing distribution has a smaller variance, ``single-step denoising" can achieve much better denoising results (see \figref{fig:app:single_step_denoising_0.1}, where we use $q(\tilde x|x)=\mathcal{N}(\tilde x|x, 0.1^2)$). 
Although denoising can be easier when the smoothing distribution has a smaller variance, modeling the smoothed distribution could be harder as we discussed before.

In general, the right denoising results should be samples coming from $p(x|\tilde x)$, which is the reason why samples from $p_{\theta}(x|\tilde x)$ (\ie introducing the model $p_{\theta}(x|\tilde x)$) is more ideal than using $\mathbb{E}_{\theta}[x|\tilde x]$ as a denoising substitute (\ie ``single-step denoising").
In general, the capacity of the denoising model $p_{\theta}(x|\tilde x)$ also matters in terms of denoising results. Let us again consider the datapoint $\tilde x_0$ shown in \figref{fig:app:single_step_denoising_data}. If the invert smoothing model $p_{\theta}(x|\tilde x)$ is a one mode logistic distribution, due to the mode covering property of maximum likelihood estimation, given the smoothed observation $\tilde x_0$, the best the model can do is to center its only mode at $\tilde x_0$ for approximating $p(x|\tilde x=\tilde x_0)$ (see \figref{fig:app:1model_denoising_x0}). Thus, like $\tilde x_0$, the smoothed datapoints at the low density region between the two modes of $p_{\text{data}}(x)$ are still likely to remain between the two modes after denoising
(see \figref{fig:app:1model_denoising}). To solve this issue, we can increase the capacity of $p_{\theta}(x|\tilde x)$ by making it a two mixture of logistics. 
In this case, the distribution $p_{\theta}(x|\tilde x=\tilde x_0)$ can be captured in a better way (see \figref{fig:app:2model_denoising_x0} and \figref{fig:app:ground_truth_denoising_data_11}). 
After the invert smoothing process, like $\tilde x_0$, most smoothed datapoints in the low density can be mapped to one of the two high density modes (see \figref{fig:app:2model_denoising}), resulting in much better denoising effects.

\begin{figure}[H]
    \centering
    \begin{subfigure}{0.45\textwidth}
    \includegraphics[width=\textwidth]{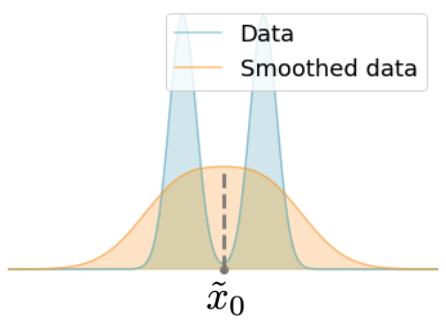}
    \caption{Data distribution ($q(\tilde x|x)=\mathcal{N}(\tilde{x}|x, 0.3^2)$).}
    \label{fig:app:single_step_denoising_data}
    \end{subfigure}
    \begin{subfigure}{0.45\textwidth}
    \vspace*{-0.1in}
    \includegraphics[width=\textwidth]{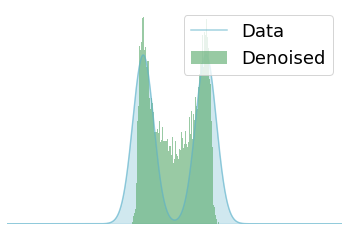}
    \vspace*{0.1in}
    \caption{Single-step denoising results.}
    \label{fig:app:single_step_denoising_result}
    \end{subfigure}
    \caption{1-d single-step (gradient based) denoising.
   }
    \label{fig:app:single_step_denoising}
\end{figure}

\begin{figure}[H]
    \centering
    \begin{subfigure}{0.45\textwidth}
    \includegraphics[width=\textwidth]{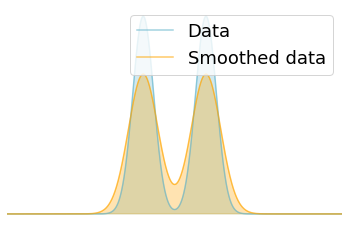}
    \caption{Data distribution ($q(\tilde x|x)=\mathcal{N}(\tilde{x}|x, 0.1^2)$).}
    \label{fig:app:single_step_denoising_data_0.1}
    \end{subfigure}
    \begin{subfigure}{0.5\textwidth}
    \vspace*{-0.2in}
    \includegraphics[width=\textwidth]{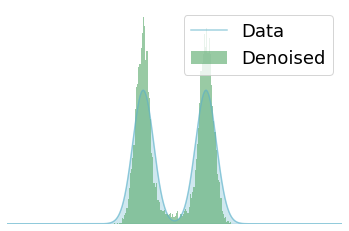}
    \vspace*{-0.1in}
    \caption{Single-step denoising results.}
    \label{fig:app:single_step_denoising_result_0.1}
    \end{subfigure}
    \caption{1-d single-step (gradient based) denoising.
   }
    \label{fig:app:single_step_denoising_0.1}
\end{figure}

\begin{figure}[H]
    \centering
    \begin{subfigure}{0.32\textwidth}
    \includegraphics[width=\textwidth]{figures/single_step_denoise/single_step_denoise_data.png}
    \caption{Data distribution.
    }
    \end{subfigure}
    \begin{subfigure}{0.32\textwidth}
    \vspace*{-0.1in}
    \includegraphics[width=\textwidth]{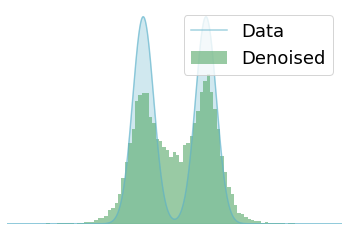}
    \vspace*{0.01in}
    \caption{Denoised with $p_{\theta}(x|\tilde x)$.}
    \label{fig:app:1model_denoising}
    \end{subfigure}
    \begin{subfigure}{0.32\textwidth}
    \includegraphics[width=\textwidth]{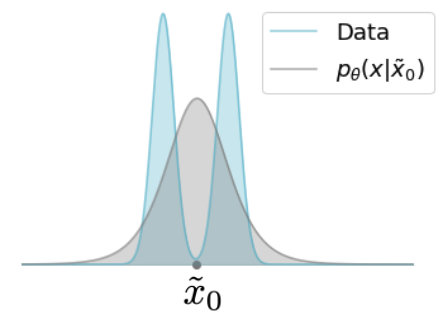}
    \caption{Distribution of $p_{\theta}(x|\tilde x=\tilde x_0)$.}
    \label{fig:app:1model_denoising_x0}
    \end{subfigure}
    \caption{Denoising with $p_{\theta}(x|\tilde x)$, which is modeled by one mixture of logistics.
   }
    \label{fig:app:1model}
\end{figure}

\begin{figure}[H]
    \centering
     \begin{subfigure}{0.32\textwidth}
    \includegraphics[width=\textwidth]{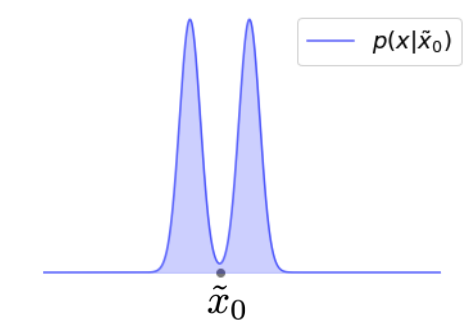}
    \caption{Ground truth $p(x|\tilde x=\tilde x_0)$.}
    \label{fig:app:ground_truth_denoising_data_11}
    \end{subfigure}
    \begin{subfigure}{0.32\textwidth}
    \vspace*{-0.1in}
    \includegraphics[width=\textwidth]{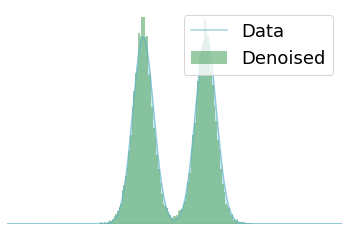}
    \vspace*{0.01in}
    \caption{Denoising with $p_{\theta}(x|\tilde x)$.}
    \label{fig:app:2model_denoising}
    \end{subfigure}
    \begin{subfigure}{0.32\textwidth}
    \includegraphics[width=\textwidth]{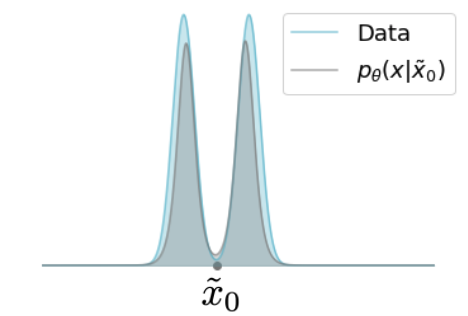}
    \caption{Distribution of $p_{\theta}(x|\tilde x=\tilde x_0)$.}
    \label{fig:app:2model_denoising_x0}
    \end{subfigure}
    \caption{Denoising with $p_{\theta}(x|\tilde x)$, which is modeled by two mixtures of logistics.
   }
    \label{fig:app:2model}
\end{figure}

\subsection{Analysis on 2-d denoising}
\label{app:2-d_denoise}
On the 2-d Olympics dataset in section \ref{sec:2-d-dataset}, we find that the intersections between rings can be poorly modeled with the proposed smoothing approach when only two mixture of logistics are used (see \figref{fig:app:2d_noisy_samples_2}). 
We believe this can be caused if the denoising model is not flexible enough to capture the distribution $p(\bfx|\tilde \bfx)$.
More specifically, we note that
the ground truth distribution for $p(\bfx|\tilde \bfx)$ at the intersections of the rings is a highly complicated distribution and can be hard to capture using our model which only has two mixtures of logistics for each dimension. 
If we increase the flexibility of $p_{\theta}(\bfx|\tilde \bfx)$ by using three or four mixtures of logistics components (note that we still use fewer mixture components than the MADE baseline and we use comparable number of parameters), 
the intersection of the rings can be modeled in an improved way (see \figref{fig:app:2d_example}).

We also provide ``single-step denoising" results for the experiments in Section~\ref{sec:2-d-dataset} (see \figref{fig:app:single_step}), where we use the same smoothing distribution, and the MADE model with three mixture components as used in section \ref{sec:2-d-dataset}. We note that ``single-step denoising" results are not very good, which is also expected. As discussed in section \ref{app:1-d},
when the smoothing distribution has a relatively large variance, $\mathbb{E}_{\theta}[\bfx|\tilde\bfx]$ is not a good approximation for the denoised result, and we want the denoised sample to come from the distribution $p_{\theta}(\bfx|\tilde \bfx)$, in which case introducing a denoising model $p_{\theta}(\bfx|\tilde \bfx)$ could be a better option. Although we could select $q(\tilde\bfx|\bfx)$ to have a smaller variance so that ``single-step denoing" could work reasonably well, but modeling $p(\tilde \bfx)$ in this case could be more challenging.

\begin{figure}
    \centering
    \begin{subfigure}{0.19\textwidth}
    \includegraphics[width=\textwidth]{figures/2d_visualization/olympics_data.png}
    \caption{Data}
    \label{fig:app:2d_data_2}
    \end{subfigure}
    \begin{subfigure}{0.19\textwidth}
    \includegraphics[width=\textwidth]{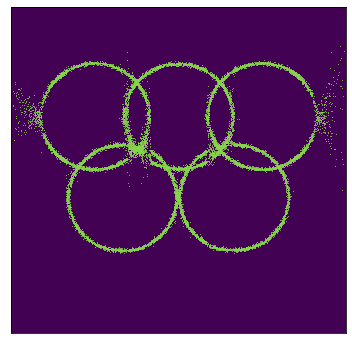}
    \caption{MADE ($n=5$)}
    \label{fig:app:2d_baseline_2}
    \end{subfigure}
    \begin{subfigure}{0.19\textwidth}
    \includegraphics[width=\textwidth]{figures/2d_visualization/made_mle_6.png}
    \caption{MADE ($n=6$)}
    \label{fig:app:2d_baseline_2}
    \end{subfigure}
    \begin{subfigure}{0.19\textwidth}
    \includegraphics[width=\textwidth]{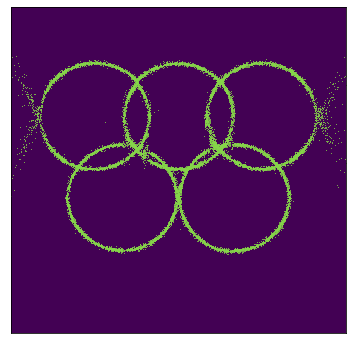}
    \caption{MADE ($n=7$)}
    \label{fig:app:2d_baseline_2}
    \end{subfigure}
    \\
    \begin{subfigure}{0.19\textwidth}
    \includegraphics[width=\textwidth]{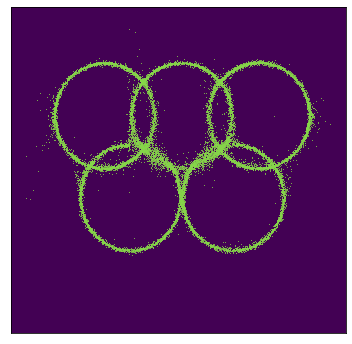}
    \caption{Ours ($n=2$)}
    \label{fig:app:2d_noisy_samples_2}
    \end{subfigure}
    \begin{subfigure}{0.19\textwidth}
    \includegraphics[width=\textwidth]{figures/2d_visualization/noise_conditioned_olympics.png_3_mixture.png}
    \caption{Ours ($n=3$)}
    \label{fig:app:2d_noisy_samples_3}
    \end{subfigure}
    \begin{subfigure}{0.19\textwidth}
    \includegraphics[width=\textwidth]{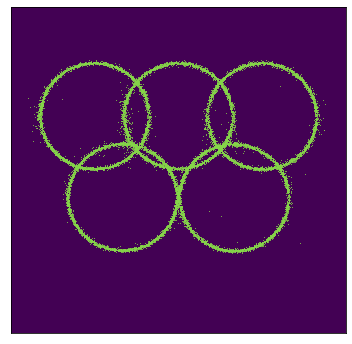}
    \caption{Ours ($n=4$)}
    \label{fig:app:2d_noisy_samples_3}
    \end{subfigure}
    \caption{Samples on 2-d synthetic datasets. We use a MADE model with comparable number of parameters for both our method and the baseline. The models have $n$ mixture of logistics for each dimension. 
    Our method is able to obtain reasonable samples when using fewer mixture components, while the baseline still has trouble modeling the two sides of the rings when $n=7$.}
    \label{fig:app:2d_example}
\end{figure}

\begin{figure}
    \centering
    \begin{subfigure}{0.3\textwidth}
    \includegraphics[width=\textwidth]{figures/2d_visualization/toy_data.png}
    \caption{Data}
    \end{subfigure}
    \begin{subfigure}{0.3\textwidth}
    \includegraphics[width=\textwidth]{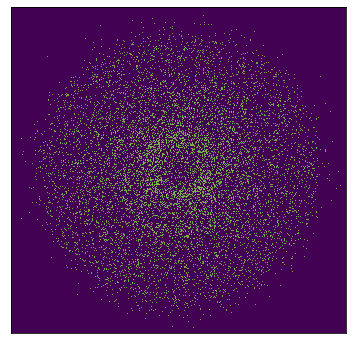}
    \caption{Smoothed data distribution}
    \end{subfigure}
    \begin{subfigure}{0.3\textwidth}
    \includegraphics[width=\textwidth]{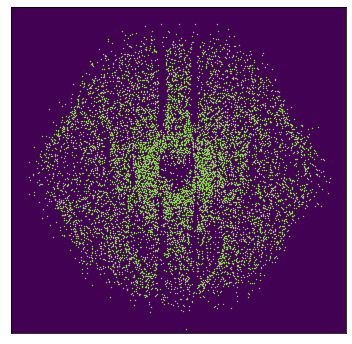}
    \caption{Single-step denoising results}
    \end{subfigure}
    \vfill
    \begin{subfigure}{0.3\textwidth}
    \includegraphics[width=\textwidth]{figures/2d_visualization/olympics_data.png}
    \caption{Data}
    \end{subfigure}
    \begin{subfigure}{0.3\textwidth}
    \includegraphics[width=\textwidth]{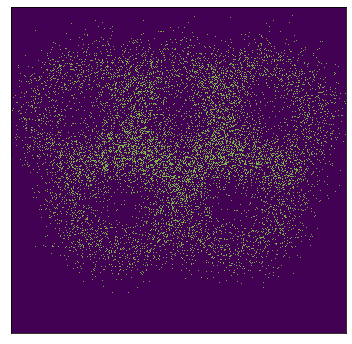}
    \caption{Smoothed data distribution}
    \end{subfigure}
    \begin{subfigure}{0.3\textwidth}
    \includegraphics[width=\textwidth]{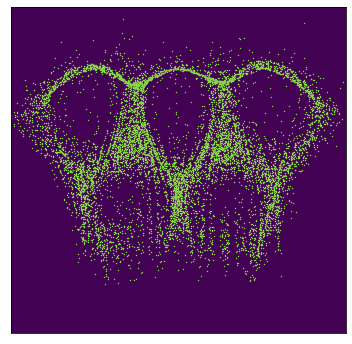}
    \caption{Single-step denoising results}
    \end{subfigure}
    \caption{Single-step denoising results on 2-d synthetic datasets. We use the same MADE model with three mixture components and the same smoothing distribution as mentioned in Section \ref{sec:2-d-dataset}.}
    \label{fig:app:single_step}
\end{figure}

\section{Image experiments}
\label{app:image}
\subsection{Settings}
For the image experiments, we first rescale images to $[-1,1]$ and then perturb the images with $q(\tilde\bfx|\bfx)=\mathcal{N}(\tilde\bfx|\bfx,\sigma^2I)$. We use $\sigma=0.5$ for MNIST and $\sigma=0.3$ for both CIFAR-10 and CelebA. The selection of $\sigma$ is mainly based on analysis in \citep{saremi2019neural}. More specifically, given an image, we consider the median value of the Euclidean distance between two data points in a dataset, and then divide it by $2\sqrt{D}$, where $D$ is the dimension of the data. This provides us with a way of selecting the variance of $q(\tilde\bfx|\bfx)$, when $q(\tilde\bfx|\bfx)$ is a Gaussian distribution. We find this selection of variance able to generate reasonably well samples in practice. 
We train all the models with Adam optimizer with learning rate $0.0002$. To model $p_{\theta}(\bfx|\tilde\bfx)$, we stack $\tilde\bfx$ and $\bfx$ together at the second dimension to obtain $\hat\bfx=[\tilde\bfx, \bfx]$, which ensures that $\tilde\bfx$ comes before $\bfx$ in the pixel ordering. For instance, this stacking would provide an image $\hat\bfx$ with size $1\times (2\times 28)\times 28$ on a MNIST image, and an image with size $3\times (2\times 32)\times 32$ on a CIFAR-10 image.
Since PixelCNN++ consists of convolutional layers, we can directly feed $\hat\bfx$ into the default architecture without modifying the model architecture. As the latter pixels of the input only depend on the previous pixels in an autoregressive model and $\tilde\bfx$ comes before $\bfx$, we can parameterize $p_{\theta}(\bfx|\tilde\bfx)$ by computing the likelihoods only on $\bfx$ using the outputs from the autoregressive model.

\subsection{Image Inpainting}
\label{app:image_inpainting}
Since both $p_{\theta}(\tilde \bfx)$ and $p_{\theta}(\bfx|\tilde \bfx)$ are parameterized by an autoregressive model, we can also perform image inpainting using our method. We present the inpainting results on CIFAR-10 in \figref{fig:cifar10_inpaint} and CelebA in \figref{fig:celeba_inpaint}, where the bottom half of the input image is being inpainted.
\begin{figure}[H]
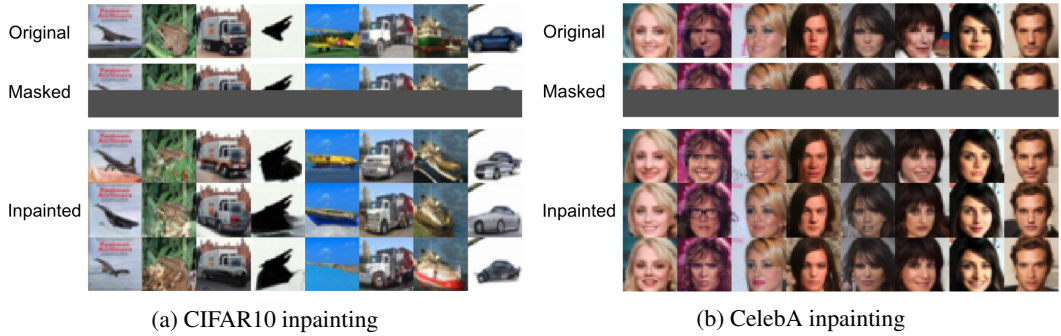

    \begin{subfigure}{0.49\linewidth}
    \includegraphics[width=\textwidth, trim=0 0 0 0, clip]{figures/cifar10_inpaint.png}
    \caption{CIFAR10 inpainting}
    \label{fig:cifar10_inpaint}
    \end{subfigure}
    \hfill
    \begin{subfigure}{0.49\linewidth}
    \includegraphics[width=\textwidth, trim=0 0 0 0, clip]{figures/celeba_inpaint.png}
    \caption{CelebA inpainting}
    \label{fig:celeba_inpaint}
    \end{subfigure}
    \caption{Inpainting results from our two-step method. The bottom half of the images are masked for inpainting.} 
\end{figure}

\subsection{Image Denoising}
\label{app:image_denoising}
We notice that the reverse smoothing process can also be understood as a denoising process. 
Besides the ``single-step denoising" approach shown above, we can also apply $p_{\theta}(\bfx|\tilde\bfx)$ to denoise images.
To visualize the denoising performance, we sample $\bfx_{\text{test}}$ from the test set and perturb $\bfx_{\text{test}}$ with $q(\tilde \bfx|\bfx)$ to obtain a noisy sample $\tilde \bfx _{\text{test}}$. We feed $\tilde \bfx_{\text{test}}$ into  $p_{\theta}(\bfx|\tilde \bfx=\tilde \bfx_{\text{test}})$ and draw samples from the model. We visualize the results in \figref{fig:denoising}. 
As we can see, the model exhibits reasonable denoising results, which shows that the autoregressive model is capable of learning the data distribution when conditioned on the smoothed data. 

\begin{figure}[H]
\vspace{-3pt}
    \centering
    \begin{subfigure}{0.22\textwidth}
    \adjincludegraphics[width=\textwidth,trim={0 {0.5\height} 0 0},clip]{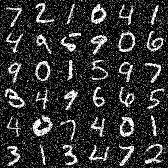}
    \caption{Noisy MNIST}
    \end{subfigure}
    ~
    \begin{subfigure}{0.22\textwidth}
    \adjincludegraphics[width=\textwidth,trim={0 {0.5\height} 0 0},clip]{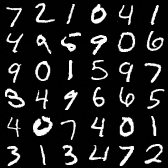}
    \caption{MNIST denoising}
    \end{subfigure}
    \begin{subfigure}{0.22\textwidth}
    \adjincludegraphics[width=\textwidth,trim={0 {0.5\height} 0 0},clip]{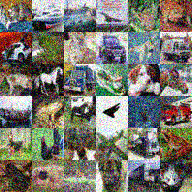}
    \caption{Noisy CIFAR10}
    \label{fig:mnist_noisy}
    \end{subfigure}
    ~
    \begin{subfigure}{0.22\textwidth}
    \adjincludegraphics[width=\textwidth,trim={0 {0.5\height} 0 0},clip]{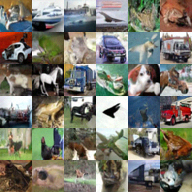}
    \caption{CIFAR10 denoising}
    \end{subfigure}
    \caption{Denoising with $p_{\theta}(\bfx|\tilde \bfx)$}
    \label{fig:denoising}
\end{figure}

\newpage
\subsection{More samples}
\label{app:image_samples}
\FloatBarrier
\vfill
\begin{figure}[H]
    \centering
    \includegraphics[width=0.9\textwidth]{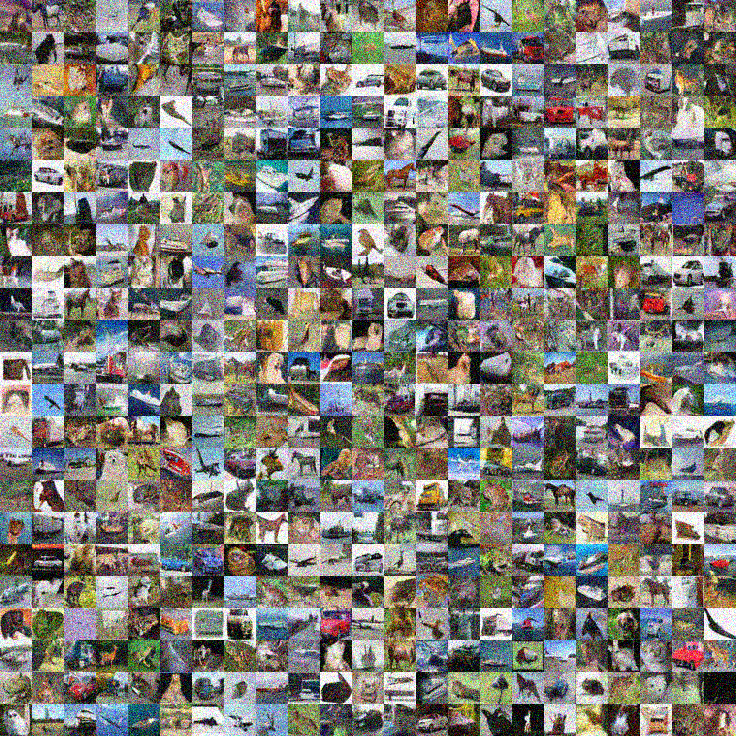}
    \caption{CIFAR-10 samples from $p_{\theta}(\tilde \bfx)$ (unconditioned on class labels).}
    \label{fig:cifar10_noisy_samples_app}
\end{figure}
\FloatBarrier
\vfill
\newpage
\vspace*{\fill}
\FloatBarrier
\begin{figure}[H]
    \centering
    \includegraphics[width=0.9\textwidth]{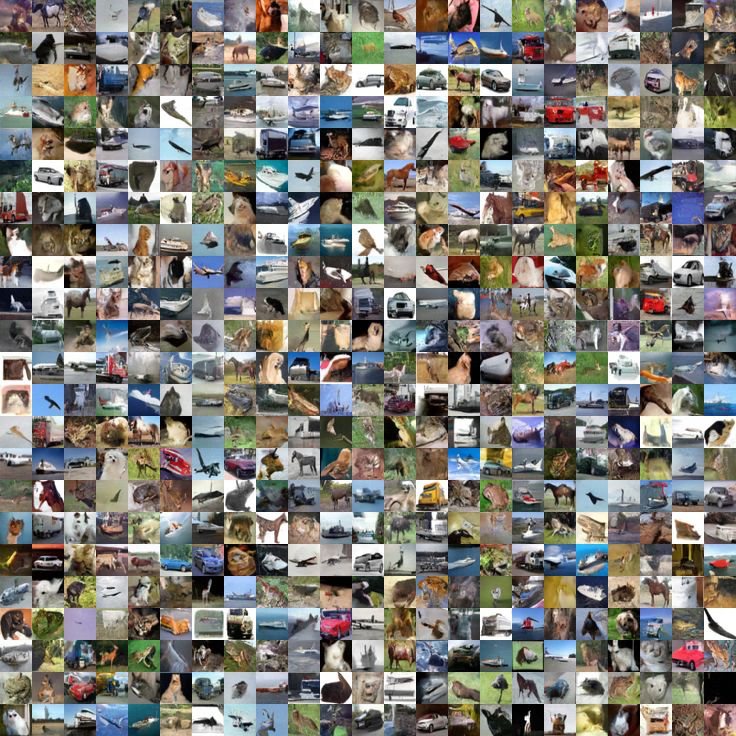}
    \caption{CIFAR-10 samples from $p_{\theta}(\bfx|\tilde \bfx)$ (unconditioned on class labels).}
    \label{fig:cifar10_samples_app}
\end{figure}
\FloatBarrier
\vfill
\newpage
\vspace*{\fill}
\FloatBarrier
\begin{figure}[H]
    \centering
    \includegraphics[width=0.9\textwidth]{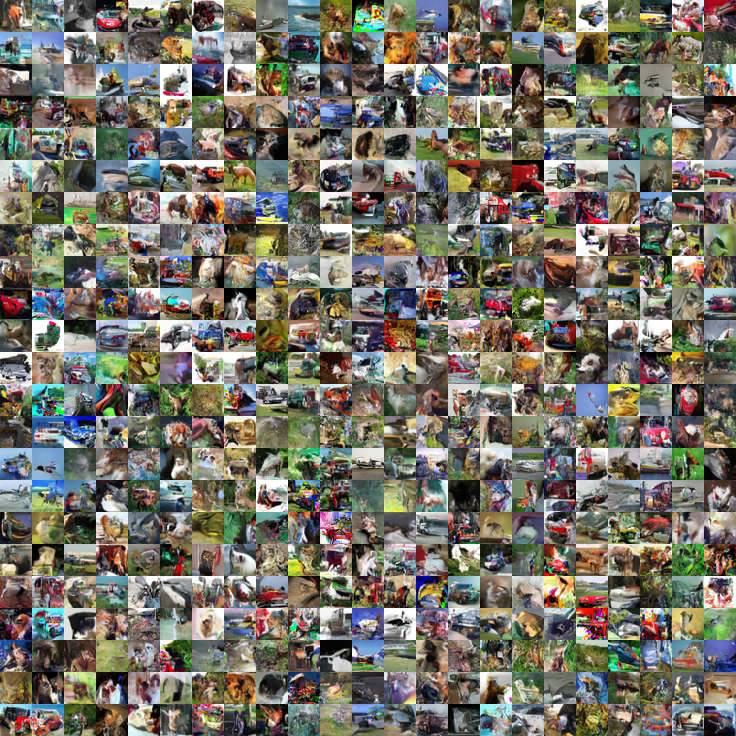}
    \caption{CIFAR-10 samples from the original PixelCNN++ method (unconditioned on class labels).}
    \label{fig:cifar10_original_samples_app}
\end{figure}
\FloatBarrier
\vfill
\newpage
\vspace*{\fill}
\FloatBarrier
\begin{figure}[H]
    \centering
    \includegraphics[width=0.9\textwidth]{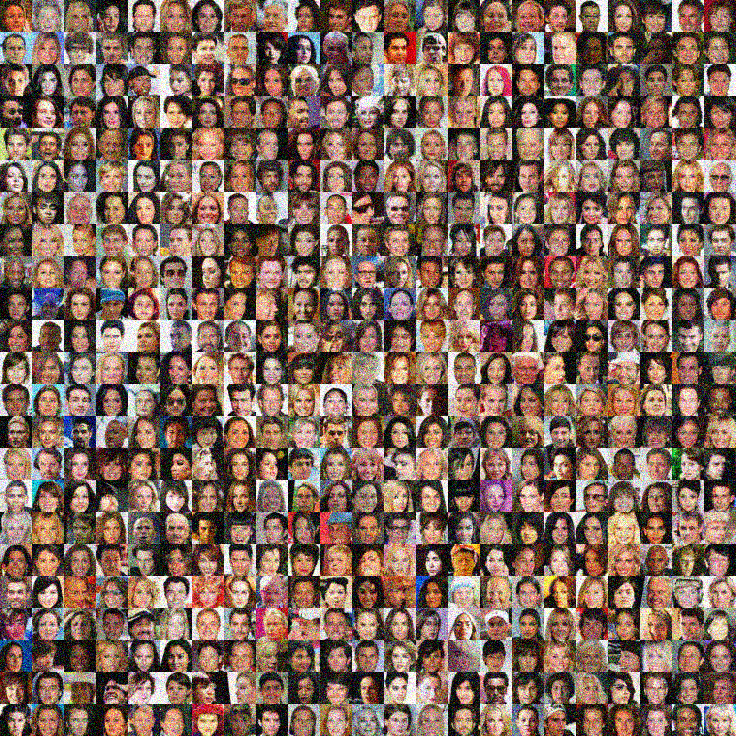}
    \caption{CelebA samples from $p_{\theta}(\tilde \bfx)$ (unconditioned on class labels).}
    \label{fig:celeba_noisy_samples_app}
\end{figure}
\FloatBarrier
\vfill
\newpage
\vspace*{\fill}
\FloatBarrier
\begin{figure}[H]
    \centering
    \includegraphics[width=0.9\textwidth]{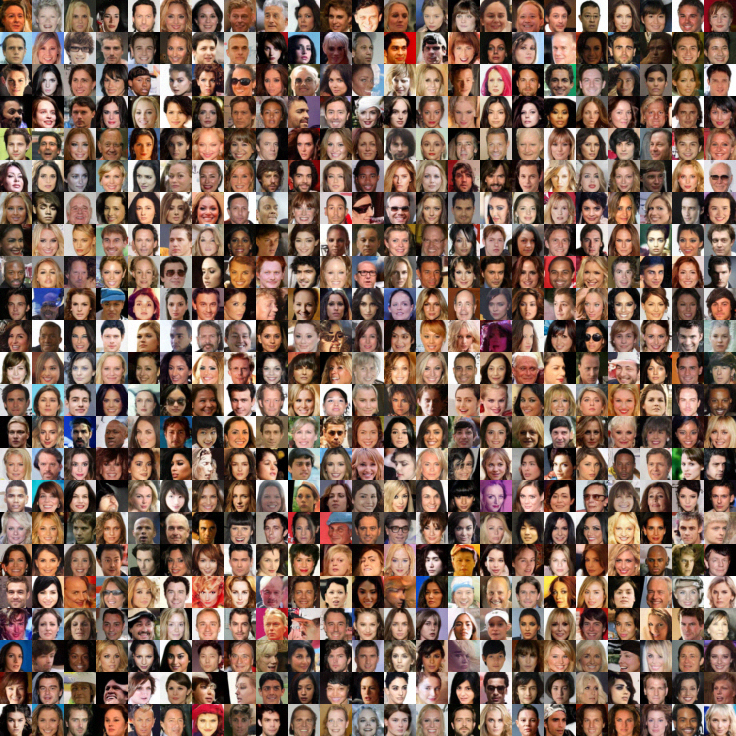}
    \caption{CelebA samples from $p_{\theta}(\bfx|\tilde \bfx)$ (unconditioned on class labels).}
    \label{fig:celeba_samples_app}
\end{figure}
\FloatBarrier
\vfill
\newpage
\vspace*{\fill}
\FloatBarrier
\begin{figure}[H]
    \centering
    \includegraphics[width=0.9\textwidth]{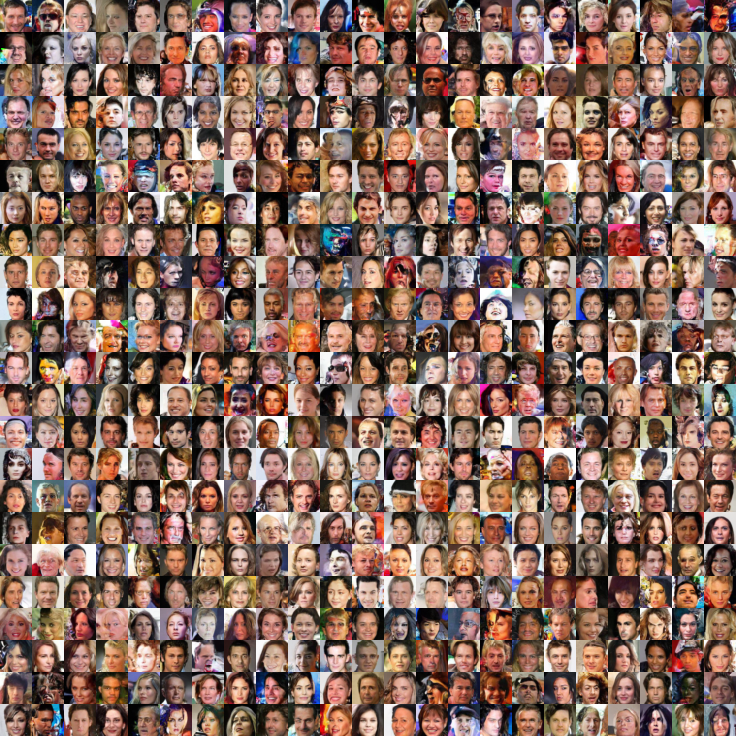}
    \caption{CelebA samples from the original PixelCNN++ method (unconditioned on class labels).}
    \label{fig:celeba_original_samples_app}
\end{figure}
\subsection{Nearest neighbors}
\vspace*{\fill}
\FloatBarrier
\label{app:neighbor}
\begin{figure}
    \centering
    \includegraphics[width=0.5\textwidth]{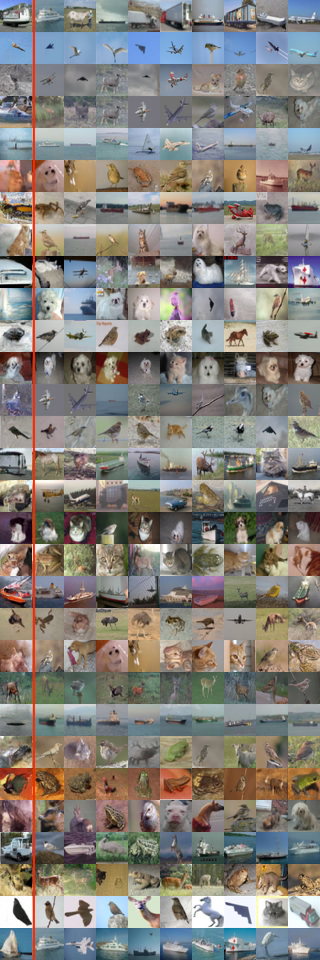}
    \caption{ Nearest neighbors measured by the $\ell_2$ distance between images. Images on the left of the red vertical line are samples from our model. Images on the right are nearest neighbors in the training dataset.}
    \label{fig:cifar10_l2_neighbor_inception}
\end{figure}

\begin{figure}
    \centering
    \includegraphics[width=0.5\textwidth]{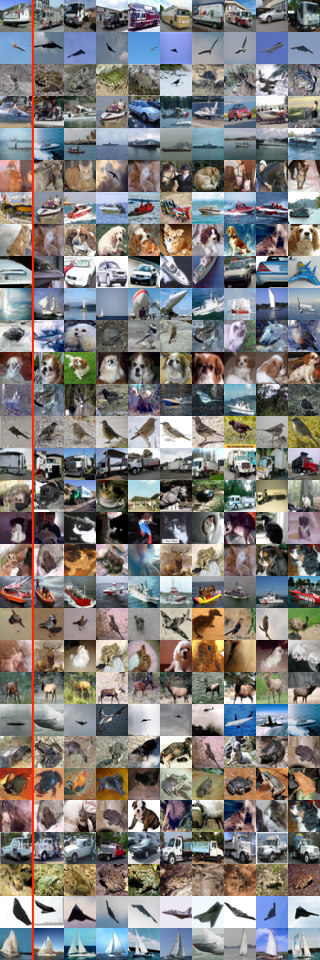}
    \caption{Nearest neighbors measured by the $\ell_2$ distance in the feature space of an Inception V3 network pretrained on ImageNet. Images on the left of the red vertical line are samples from our model. Images on the right are nearest neighbors in the training dataset. 
    }
    \label{fig:cifar10_neighbor_inception}
\end{figure}
\newpage
\begin{figure}[H]
    \centering
    \includegraphics[width=0.5\textwidth]{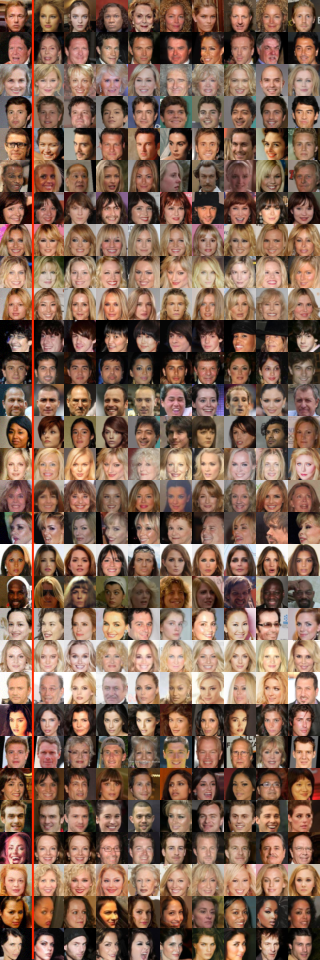}
    \caption{ Nearest neighbors measured by the $\ell_2$ distance between images. Images on the left of the red vertical line are samples from our model. Images on the right are nearest neighbors in the training dataset.}
    \label{fig:celeba_l2_neighbor_inception}
\end{figure}
\newpage
\begin{figure}[H]
    \centering
    \includegraphics[width=0.5\textwidth]{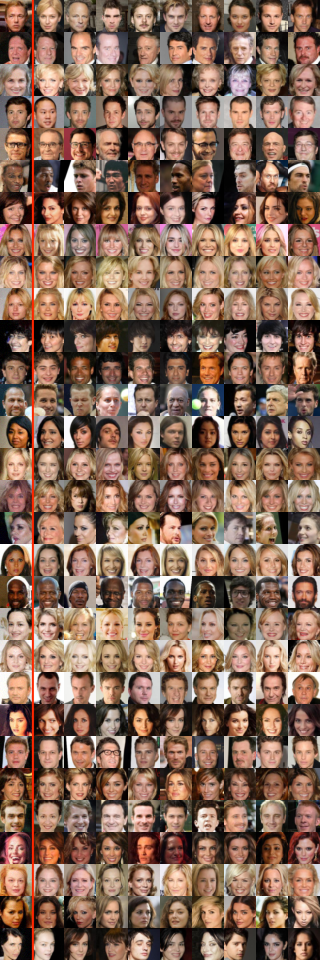}
    \caption{Nearest neighbors measured by the $\ell_2$ distance in the feature space of an Inception V3 network pretrained on ImageNet. Images on the left of the red vertical line are samples from our model. Images on the right are nearest neighbors in the training dataset.}
    \label{fig:celeba_neighbor_inception}
\end{figure}

\subsection{Ablation studies}
In this section, we show that gradient-based ``single-step denoising" will not improve sample qualities without performing randomized smoothing. To see this, we draw samples from a PixelCNN++ $p_{\theta}(\bfx)$ trained directly on $p_{\text{data}}(\bfx)$ (\ie without smoothing).
We perform ``single-step denoising" update defined as
\begin{equation}
    \bfx = \bfx + \sigma^2\nabla_{\bfx}\log p_{\theta}(\bfx).
\end{equation}
We explore various values for $\sigma$, and report the results in \figref{fig:app:sigma_unsmooth}. This shows that ``single-step denoising" alone (without randomized smoothing) will not improve sample quality of PixelCNN++.

\begin{figure}[H]
    \centering
    \begin{subfigure}{0.23\textwidth}
    \includegraphics[width=\textwidth]{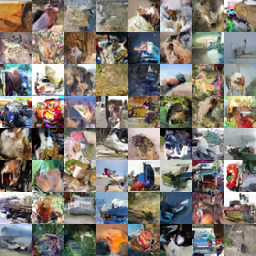}
    \caption{$\sigma=0$ (Original)}
    \end{subfigure}
    \begin{subfigure}{0.23\textwidth}
    \includegraphics[width=\textwidth]{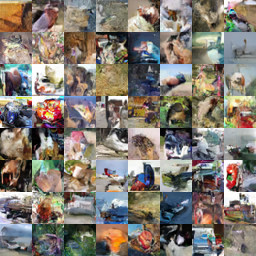}
    \caption{$\sigma=0.01$}
    \end{subfigure}
    \begin{subfigure}{0.23\textwidth}
    \includegraphics[width=\textwidth]{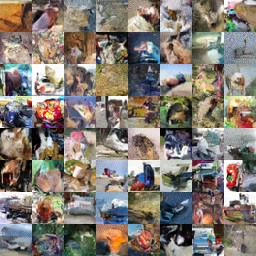}
    \caption{$\sigma=0.05$}
    \end{subfigure}
    \begin{subfigure}{0.23\textwidth}
    \includegraphics[width=\textwidth]{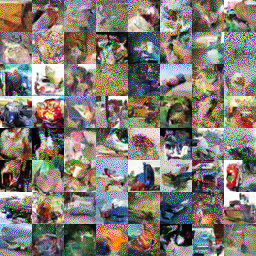}
    \caption{$\sigma=0.1$}
    \end{subfigure}
    \caption{``Single-step denoising" on PixelCNN++ trained on un-smoothed data. $\sigma=0$ corresponds to the original samples.
    }
    \label{fig:app:sigma_unsmooth}
\end{figure}

\end{document}